\documentclass[10pt]{article}

\usepackage[preprint]{neurips_2023}
\makeatletter
\renewcommand{\@noticestring}{}
\makeatother

\usepackage[utf8]{inputenc}
\usepackage[T1]{fontenc}
\usepackage{hyperref}
\usepackage{url}
\usepackage{booktabs}
\usepackage{amsfonts}
\usepackage{amsmath}
\usepackage{amssymb}
\usepackage{nicefrac}
\usepackage{microtype}
\usepackage{xcolor}
\usepackage{graphicx}
\usepackage{algorithm}
\usepackage{algorithmic}
\usepackage{enumitem}
\usepackage{amsthm}
\usepackage{ragged2e}
\usepackage{placeins}
\usepackage{float}
\usepackage{fancyhdr}
\usepackage{tikz}
\usetikzlibrary{shapes.geometric, arrows.meta, positioning, fit, backgrounds, calc}

\newtheorem{definition}{Definition}
\newtheorem{theorem}{Theorem}
\newtheorem{lemma}{Lemma}
\newtheorem{proposition}{Proposition}

\newtheorem{assumption}{Assumption}

\newcommand{\R}{\mathbb{R}}
\newcommand{\sig}{\mathrm{sig}}

\fancypagestyle{plain}{\pagestyle{fancy}}
\fancypagestyle{empty}{%
  \fancyhf{}
  
  \fancyfoot[C]{\footnotesize Preprint --- April 2026\hfill \thepage}
}

\begin{document}


\title{
\textbf{Skyline-First Traversal as a Control Mechanism for Multi-Criteria Graph Search}\\[0.3em]
\large{A Structural Theory under Constrained Cost Models}
}

\author{
Nicolas Tacheny \\
Ni2 Innovation Lab, Li\`ege, Belgium \\
University of Mons (UMONS), Belgium \\
\texttt{nicolas.tacheny@\{ni2.com,\,umons.ac.be\}}
}

\maketitle


\begin{abstract}
In multi-criteria graph traversal, paths are compared via Pareto dominance---an ordering that identifies which paths are non-dominated, but says nothing about which path to expand next or when the search may stop.
As a result, existing approaches rely on external mechanisms---heuristics, scalarization, or population-based exploration---while Pareto dominance remains confined to passive roles such as pruning or ranking.

This paper shows that, under constrained cost models---finite cost grids, Markovian transitions, and a nonzero progress measure---Pareto geometry alone is sufficient to drive both scheduling and termination.

We show that extracting exclusively from the first Pareto layer---the skyline---induces a deterministic descent in a discrete completion potential, ensuring monotone progress toward solution completion.
In parallel, a vector lower-bound certificate provides a stopping condition that guarantees dominance coverage of all remaining traversals, without requiring a predefined number of solutions.

Our analysis establishes deterministic potential descent, certified termination via dominance coverage, a uniform bound on layer width induced by cost-grid geometry, and greedy cost-space dispersion within the skyline.
The resulting framework operates without scalarization, heuristic guidance, or probabilistic models, and repositions Pareto dominance from a passive filter to a deterministic driver of search.
\end{abstract}


\section{Introduction}

Multi-criteria traversal---in the sense of~\cite{tacheny2025parametric}---requires evaluating candidate solutions under several competing objectives simultaneously.
Each path $p$ carries a cost vector $C(p) \in \mathbb{R}_{\ge 0}^d$, and componentwise dominance induces only a partial order on the set of candidates.
In the scalar case, a priority queue resolves both scheduling and termination.
In the multi-criteria case, these two questions become structurally open: \emph{which path should be expanded next?}\ and \emph{when can exploration safely stop?}

The parametric traversal framework of~\cite{tacheny2025parametric} maintains a frontier of paths and leaves the extraction policy as a configurable parameter.
This paper designs such a policy.
Rather than proposing a new traversal algorithm, this work identifies
when the Pareto structure of the frontier can serve as a complete
scheduling mechanism — under explicit structural assumptions introduced
in Section~\ref{sec:model}.
Our guarantees are not universal but structural: they hold under explicit assumptions on the cost space (bounded discrete grid), the accumulation model (Markovian transitions), and the feasibility dynamics (nonzero progress measure).
Within this scope, the dominance geometry of the frontier serves not merely as a filter, but as a \emph{deterministic scheduling mechanism} that simultaneously governs extraction order, cost-space coverage, and certified termination, without breaking the convergence dynamic toward a feasible solution.
We address the problem of computing a Pareto-representative set of feasible traversals without relying on scalarization or heuristic guidance.

\paragraph{Structured cost spaces.}
In real-world infrastructure systems---such as telecom or datacenter networks---path search is not performed over arbitrary graphs.
Costs are discretized by design, transitions are constrained by local engineering rules, and feasible paths exhibit structured progress toward completion.
These properties induce a highly regular cost space that generic multi-objective formulations typically do not exploit.
This paper formalizes such structure and shows that it is sufficient to turn Pareto geometry into a structural control mechanism for traversal.
The key insight is that the frontier element closest to completion always lies in the skyline --- so restricting extraction to the first Pareto layer preserves convergence.

\paragraph{Approach.}
The frontier $\mathcal{F}(t)$ decomposes into successive Pareto layers.
We show that extracting exclusively from the first layer (the skyline) yields two core guarantees.
The central result is a \emph{deterministic potential descent} (Theorem~\ref{thm:deterministic-layer-first}): the frontier element closest to a feasible solution always lies in the skyline, so a skyline-only policy never loses access to it; a natural integer potential measuring residual completion distance is monotonically non-increasing, with strict decrease each time this minimum-potential element is extracted.
Second, a \emph{certified stopping criterion} (Theorem~\ref{thm:certified-stop}): a vector lower-bound certificate detects when no future path can improve the current solution set beyond the quantization granularity, so the algorithm terminates without a predefined number of desired solutions.
Together, these two mechanisms establish a \emph{dominance coverage} property: the returned solution set dominates every feasible path up to quantization granularity.
Our guarantees are collective rather than incremental: individual solutions discovered during the search are not required to be globally non-dominated at discovery time; optimality is established at termination through the stopping certificate.
No scalarization, total ordering, or probabilistic model is required for either result.
Unlike multi-objective A*-like approaches, which rely on heuristic guidance or scalarization, the results show that under structural conditions, no additional guidance is required beyond dominance: we replace incremental optimality guarantees with a certified global completeness result.

\paragraph{Contributions.}
The paper makes the following contributions:
\begin{enumerate}[nosep]
\item \textbf{Deterministic potential descent} (Theorem~\ref{thm:deterministic-layer-first}). The frontier element closest to a feasible solution always lies in the skyline. A natural integer potential is monotonically non-increasing under any skyline-only policy, with strict decrease at each extraction of this element. Skyline restriction preserves convergence.
\item \textbf{Certified stopping criterion} (Theorem~\ref{thm:certified-stop}). A vector lower-bound certificate detects when no future path can improve the current solution set. Together with the descent, this establishes a \emph{dominance coverage} property: the returned set dominates every feasible path.
\item \textbf{Uniform layer width bound} (Proposition~\ref{thm:layer-width}). Each Pareto layer contains a bounded number of elements per signature, determined by the cost-grid cardinality. This keeps the instantaneous decision space tractable and yields a runtime governed by cost-grid geometry rather than path combinatorics.
\item \textbf{Greedy cost-space coverage}. The bin-coverage tie-breaking rule ensures that the policy never revisits an already-explored region of cost space when an unexplored one is available in the current skyline, promoting geometric dispersion of explored solutions.
\end{enumerate}

\paragraph{Reading guide.}
Section~\ref{sec:model} introduces the level-lifted Markovian cost model and the hypotheses it requires.
Sections~\ref{sec:dominance}--\ref{sec:pareto-geometry} develop the dominance geometry of the frontier: Pareto layers, cost-faithful quantization, and the width bound that controls the decision space at each step.
Section~\ref{sec:layered-frontier-scheduling} defines the Skyline-First extraction policy built on this geometry.
Section~\ref{sec:coverage-optimality} analyzes the geometric coverage behavior induced by the bin-coverage tie-breaking rule.
The two core guarantees follow: certified stopping (Section~\ref{sec:certified-stopping}) and deterministic potential descent (Section~\ref{sec:deterministic-guarantees}), establishing that skyline restriction certifies termination and preserves convergence.
A running example is developed incrementally throughout the paper.

\newpage

\section{Related Work}
\label{sec:related_work}

Pareto dominance is central in multi-objective search~\cite{salzman2023heuristic}, but its role is consistently restricted to passive functions: output construction, local pruning, or static selection.
In none of these settings does dominance serve as an explicit, global control mechanism for scheduling and termination.

In classical multi-objective shortest path (MOSP) algorithms~\cite{martins1984multicriteria} and their heuristic-guided extensions MOA*~\cite{stewart1991multiobjective} and NAMOA*~\cite{mandow2010multiobjective}, Pareto structure serves as an \emph{output}: these methods enumerate non-dominated labels at each node, enforcing incremental optimality --- each label, once settled, is guaranteed Pareto-optimal among all paths reaching that node.
This design fundamentally decouples Pareto structure from search control: dominance does not influence which regions of the search space are explored first.
The same algorithms also use dominance as a \emph{local filter}, pruning paths that are componentwise dominated by existing ones~\cite{mandow2010multiobjective}.
This reduces the search space, but the pruning operates locally at each node: it does not affect the global exploration order or search dynamics.
We separate these two roles --- output construction and local filtering --- because they have distinct impacts on search behavior, even when combined within a single algorithm such as NAMOA*.

A third line of work bypasses dominance entirely.
Scalarization methods --- weighted sums, $\varepsilon$-constraints, and other aggregation schemes~\cite{ehrgott2005multicriteria} --- reduce the multi-objective problem to a sequence of scalar optimizations, discarding the dominance geometry entirely and exploring only a subset of the Pareto front determined by the chosen weights or constraints.
By collapsing the cost space into a scalar objective, these methods lose the structural information that the present work seeks to exploit.

Related approaches such as resource-constrained shortest paths and bounded-cost multi-objective search also exploit the discrete, bounded nature of the cost space, typically through explicit budgets or admissible bounds.
In these settings, dominance and cost thresholds are used to prune infeasible or suboptimal paths, but Pareto structure remains a filtering mechanism: it discards dominated candidates without governing the exploration order or determining when to stop.

The skyline operator~\cite{borzsonyi2001skyline}, introduced in the database community, selects all non-dominated tuples from a relation --- the same geometric operation that defines our first Pareto layer.
However, the skyline operator identifies non-dominated points but does not prescribe how they should be explored: it addresses \emph{selection} over a static dataset, not \emph{exploration} of a dynamically expanding frontier.

Beyond graph search, Pareto dominance is also widely used in other multi-objective paradigms, including evolutionary optimization~\cite{deb2002nsga2} and reinforcement learning.
In these settings, dominance plays a central role in ranking, selection, or approximation of solution sets.
However, its function remains confined to evaluation mechanisms: it assesses or filters candidate solutions, but does not determine how the exploration process unfolds over time, nor which regions of the search space are prioritized.

Across all these approaches, Pareto structure remains passive --- decoupled from search control, discarded by reduction, or limited to static selection --- but never elevated to a control law governing the dynamics of the search.
This distinction is not algorithmic but structural: it concerns the role assigned to dominance within the search process.
This paper occupies the missing role.
We treat the layered Pareto geometry of the frontier as a \emph{scheduling mechanism} that simultaneously governs extraction order, convergence, and certified termination --- transforming Pareto dominance from a descriptive property into an operational principle.
A key enabler is the structural regularity of the cost space: existing formulations typically consider general cost models and do not exploit the bounded, discrete geometry that infrastructure-constrained systems naturally exhibit.

\newpage
\section{Level-Lifted Traversal Model}
\label{sec:model}

We begin by fixing the base model and the structural cost assumption on which all subsequent results depend.
We formulate the model using paths and edges for readability; all constructions extend directly to general traversals over arbitrary transitions (see Section~\ref{sec:traversal-instance}).

\subsection{Graph and Cost Model}

Let $G = (N,E)$ be a finite directed graph with source node $s \in N$ and a set of target nodes $T \subseteq N$.
The set of \emph{paths} on~$G$ is
\[
\mathcal{P}(G) \;=\; \bigl\{\,(e_1, \dots, e_k) \;\big|\; k \ge 0,\; \forall\, i < k,\; \mathrm{dst}(e_i) = \mathrm{src}(e_{i+1})\bigr\},
\]
where $\mathrm{src}(e)$ and $\mathrm{dst}(e)$ denote the source and destination nodes of an edge~$e$, and $k=0$ gives the empty path starting at any node.
For a non-empty path $p = (e_1, \dots, e_k)$, we write $\mathrm{start}(p) = \mathrm{src}(e_1)$ and $\mathrm{end}(p) = \mathrm{dst}(e_k)$ for its initial and terminal nodes.
Given two non-empty paths $p, r \in \mathcal{P}(G)$ with $\mathrm{end}(p) = \mathrm{start}(r)$, their \emph{composition} $p \circ r \in \mathcal{P}(G)$ is the concatenation of the two edge sequences.

The main idea behind our cost model is that each path from $s$ to $T$ carries a $d$-dimensional cost vector that accumulates non-negatively along discrete levels, subject to a componentwise budget.
The discretization is natural: even inherently continuous costs (e.g.\ monetary expenditure) are in practice billed or budgeted in brackets, so a finite grid faithfully captures the operational granularity.
Each path also carries a compact state — the \emph{signature} — that summarizes everything the future cost dynamics need to know about the path history.
Because dominance comparisons are only meaningful between paths that can be extended in the same way, the signature must at least identify the current node.
It may encode additional context---e.g.\ visited resources or current operating mode---but always refines the terminal node.
When the path is extended by one edge, both the cost vector and the signature update according to a Markov rule: only the current state and the chosen edge determine the next state.
More precisely:
\begin{enumerate}[label=(\roman*),nosep]
\item Each path accumulates a $d$-dimensional cost vector $C(p) \in \R_{\ge 0}^d$.
\item The cost evolution depends on path context, summarized by a signature $\sig(p)$.
\item Both update by a Markov rule: only the current state $(\sig(p), C(p))$ and the chosen edge $e$ determine the next state.
\end{enumerate}

We begin by introducing the core objects manipulated during the search:
paths, their signatures, and their accumulated cost vectors.
\begin{definition}[Level-lifted Markovian cost model]
\label{def:lifted-markovian}
A \emph{level-lifted Markovian cost model} on $G$ is a $d$-dimensional cost function $C : \mathcal{P}(G) \to \mathbb{R}_{\ge 0}^d$
\[
C(\emptyset) = \mathbf{0}, \qquad C_i(p \circ e) = \delta_i(\sig(p),\, e,\, C_i(p)) \quad \text{for each } i \in \{1,\dots,d\},
\]
specified by the following components:
\begin{enumerate}[label=(\alph*),nosep]
\item A finite state space $\Sigma$ and a signature map 
\[
\sig : \mathcal{P}(G) \to \Sigma
\]
satisfying $\sig(p \circ e) = \alpha(\sig(p), e)$ for a transition function $\alpha : \Sigma \times E \to \Sigma$. In particular, $\sig$ refines the terminal node: $\sig(p) = \sig(q) \Rightarrow \mathrm{end}(p) = \mathrm{end}(q)$.
\item A componentwise budget $B \in \mathbb{R}_{\ge 0}^d$: a path $p$ respects the budget if $C_i(p) \le B_i$ for every $i \in \{1,\dots,d\}$.
\item For each dimension~$i$, a finite subset $\mathcal{G}_i \subset [0, B_i]$ containing $0$ and $B_i$, called the \emph{cost grid} of dimension~$i$. The full cost grid is $\mathcal{G} := \mathcal{G}_1 \times \cdots \times \mathcal{G}_d$.
\item A transition function $\delta_i : \Sigma \times E \times \mathcal{G}_i \to \mathcal{G}_i$ for each dimension~$i$, satisfying $\delta_i(\sigma, e, g) \ge g$ and monotone in~$g$: $g \le g' \Rightarrow \delta_i(\sigma, e, g) \le \delta_i(\sigma, e, g')$.
\end{enumerate}
\end{definition}

Consider the dynamics of a path that grows one edge at a time: at each step the pair $(\sig(p), C(p))$ is a Markov state, since extending $p$ by an edge $e$ yields $\sig(p \circ e) = \alpha(\sig(p), e)$ and $C_i(p \circ e) = \delta_i(\sig(p), e, C_i(p))$, both determined by $(\sig(p), C(p))$ and $e$ alone.
The model is \emph{lifted} because the original graph $G$ is embedded into the augmented state space $\Sigma$, which absorbs path-dependent information.
It is \emph{level}-based because each cost dimension $i$ tracks a discrete level $g \in \mathcal{G}_i$: the transition function $\delta_i$ maps grid levels to grid levels, so cost accumulation operates on a finite lattice rather than on~$\mathbb{R}_{\ge 0}$.
The classical additive model is recovered by setting $\delta_i(\sigma, e, g) = g + c_i(e)$ for a fixed edge-cost function $c : E \to \mathbb{R}_{\ge 0}^d$, with $\mathcal{G}_i$ chosen so that all partial sums remain on the grid. 

\paragraph{Running example.}
To illustrate how the structural assumptions of the model arise in practice, we introduce a small infrastructure network (Figure~\ref{fig:running-graph}) that will serve as a running example throughout the paper.
Consider a graph with four nodes $\{s, a, b, t\}$ and three cost criteria ($d=3$):
\begin{enumerate}[label=($c_{\arabic*}$),nosep]
\item \emph{complexity} — additive with fixed edge weights;
\item \emph{length} — additive with fixed edge weights;
\item \emph{zone-switching cost} — each time the zone of the current edge differs from that of the previous edge, a unit penalty is incurred.
\end{enumerate}
Each edge belongs to a \emph{zone} drawn from $\{\mathsf{Z_1},\mathsf{Z_2}\}$ — for instance, in a fiber-optic network, two geographic regions between which switching incurs a reconfiguration cost.
This makes $c_3$ depend on the zone of the last traversed edge, so the signature must record it: $\sig(p) = (\mathrm{end}(p),\, \mathrm{zone}(e_k))$ for $p = (e_1, \dots, e_k)$, with $\sig(p_s) = (s, \bot)$ for the trivial path, where $\bot$ denotes the initial state before any edge has been traversed.
The per-dimension cost grids are $\mathcal{G}_1 = \{0,1,2\}$, $\mathcal{G}_2 = \{0,1,2,3,4\}$, $\mathcal{G}_3 = \{0,1\}$, and the budget is $B = (2,4,1)$.
This example is representative of a broader class of infrastructure-constrained graph models in which costs are quantized and transitions depend on local state.

\begin{figure}[ht]
\centering
\begin{tikzpicture}[scale=0.85, every node/.style={font=\small},
  vertex/.style={circle, draw, minimum size=8mm, inner sep=0pt, font=\normalsize},
  zone label/.style={font=\scriptsize\bfseries},
  cost label/.style={font=\scriptsize, fill=white, inner sep=1pt}
]
  \node[vertex] (s) at (0,0) {$s$};
  \node[vertex] (a) at (3,1.6) {$a$};
  \node[vertex] (b) at (3,-1.6) {$b$};
  \node[vertex] (t) at (6,0) {$t$};

  \draw[-{Stealth}] (s) -- node[zone label, above left, pos=0.2] {$\mathsf{Z_1}$}
    node[cost label, below right, pos=0.5] {$\scriptstyle +1,\!+2\mathrm{km}$} (a);
  \draw[-{Stealth}] (a) -- node[zone label, above right, pos=0.8] {$\mathsf{Z_1}$}
    node[cost label, below left, pos=0.5] {$\scriptstyle +1,\!+2\mathrm{km}$} (t);
  \draw[-{Stealth}] (a) -- node[zone label, left, pos=0.5] {$\mathsf{Z_2}$}
    node[cost label, right, pos=0.5] {$\scriptstyle +1,\!+1\mathrm{km}$} (b);
  \draw[-{Stealth}] (s) -- node[zone label, below left, pos=0.2] {$\mathsf{Z_2}$}
    node[cost label, above right, pos=0.5] {$\scriptstyle +1,\!+1\mathrm{km}$} (b);
  \draw[-{Stealth}] (b) -- node[zone label, below right, pos=0.8] {$\mathsf{Z_2}$}
    node[cost label, above left, pos=0.5] {$\scriptstyle +0,\!+1\mathrm{km}$} (t);
\end{tikzpicture}
\caption{Running example: edge zones and per-edge cost increments in complexity ($c_1$) and length ($c_2$). The zone-switching cost $c_3$ is accumulated via the signature.}
\label{fig:running-graph}
\end{figure}
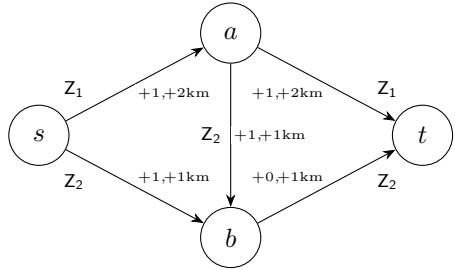

\subsection{Traversal Instance}
\label{sec:traversal-instance}

The cost model above describes per-edge dynamics.
We now package it with a graph into a single object that defines the search problem.

\begin{definition}[Level-lifted Markovian traversal instance]
\label{def:traversal-instance}
A \emph{level-lifted Markovian traversal instance} consists of a finite directed graph $G = (N,E)$,
a level-lifted Markovian cost model on $G$ (Definition~\ref{def:lifted-markovian}),
a source node $s \in N$, and a target set $T \subseteq N$.
\end{definition}

We fix a level-lifted Markovian traversal instance $(G, C, s, T)$.
The search objective is to find paths from~$s$ to~$T$ that respect the budget~$B$; we formalize this as follows.

\begin{definition}[Feasibility]
\label{def:feasibility}
Given a traversal instance $(G, C, s, T)$, we define three related notions.
\begin{enumerate}[label=(\alph*),nosep]
\item A path $p \in \mathcal{P}(G)$ is \emph{feasible} if
\[
\mathrm{start}(p) = s, \qquad \mathrm{end}(p) \in T, \qquad \forall\, i \in \{1,\dots,d\},\; C_i(p) \le B_i.
\]
\item A \emph{feasible extension} of a path $p \in \mathcal{P}(G)$ is a path $r \in \mathcal{P}(G)$ such that
\[
p \circ r \in \mathcal{P}(G) \qquad \text{and} \qquad p \circ r \text{ is feasible.}
\]
\item A path $p \in \mathcal{P}(G)$ is \emph{completable} if there exists a feasible extension of~$p$.
\end{enumerate}
\end{definition}

We assume that at least one feasible path exists.
This assumption is necessary: the potential-descent guarantee and certified stopping criterion developed in this paper are only meaningful when a feasible path exists.

We apply the parametric traversal algorithm of~\cite{tacheny2025parametric}, which maintains a \emph{frontier} $\mathcal{F}(t)$ — the set of paths generated but not yet expanded at time $t$ — and iteratively selects a frontier element, expands it by one transition, and inserts the resulting successors back into the frontier.
Successors that violate the budget are pruned and never inserted, so all frontier elements satisfy $C_i(p) \le B_i$ for every $i \in \{1,\dots,d\}$.
The extraction policy is a parameter of the algorithm; this paper develops one grounded in Pareto layer structure.

More generally, the framework of~\cite{tacheny2025parametric} defines a \emph{transition} as any pair $(n,m)$ of nodes---either an edge transition ($(n,m) \in E$) or a gap transition ($(n,m) \notin E$)---and a \emph{traversal} as a finite sequence of such transitions.
For readability, we use the terminology of paths and edges in the remainder of the paper, corresponding to the special case of edge-connected graphs; all results extend directly to the general setting.

Since the cost model is Markovian, two paths that share the same signature and cost vector are indistinguishable from the perspective of future dynamics: any extension available to one is equally available to the other, with identical cost evolution.
Maintaining both would therefore be redundant, which motivates the following assumption.

\begin{assumption}[Frontier representation invariant]
\label{ass:frontier-invariant}
For each signature in $\Sigma$, the frontier maintains at most one representative per cost vector in $\mathcal{G}$.
If two paths $p, q \in \mathcal{P}(G)$ satisfy $\sig(p) = \sig(q)$ and $C(p) = C(q)$, only one representative is kept.
This is enforced by a pruning rule at insertion time.
\end{assumption}

All subsequent definitions, lemmas, and theorems are stated within this setting unless noted otherwise.

\subsection{Structural Progress Assumption}

The cost model allows transitions with zero increment in all dimensions, so the search could explore arbitrarily many edges without measurable cost-space progress.
In infrastructure networks, however, each transition typically consumes some irreducible resource --- physical distance, an operation slot, or a unit of capacity --- making zero-cost wandering impossible in practice.
We formalize this observation as a structural progress condition.

\begin{assumption}[Nonzero progress measure]
\label{ass:min-increment}
There exists a non-empty set of \emph{progressive dimensions} $I_{\min} \subseteq \{1,\dots,d\}$ and strictly positive constants $\delta_{\min,i} > 0$ for each $i \in I_{\min}$ such that
\[
\delta_i(\sigma,e,g) - g \;\ge\; \delta_{\min,i} \quad \text{for all } (\sigma,e,g).
\]
In words: at least one cost dimension enforces a strictly positive increment at every step.
The system is \emph{dissipative} in the sense that progress resource is irreversibly consumed at every transition, making infinite wandering impossible.
\end{assumption}

More broadly, the assumptions of the cost model are consistent with structured graph models arising in infrastructure systems, where costs are quantized and transitions are governed by local engineering constraints.

Two quantities derived from Assumption~\ref{ass:min-increment} play a central role in the analysis.

\begin{definition}[Minimum increment vector and step bound]
\label{def:min-increment-vector}
The \emph{minimum increment vector} $\delta_{\min} \in \mathbb{R}_{\ge 0}^d$ is defined by
\[
(\delta_{\min})_i \;:=\;
\begin{cases}
\delta_{\min,i} & \text{if } i \in I_{\min}, \\
0 & \text{otherwise.}
\end{cases}
\]
Every edge increases the cost by at least $\delta_{\min,i}$ in each progressive dimension $i \in I_{\min}$.
The \emph{maximum step count} is
\[
\Lambda \;:=\; \min_{i \in I_{\min}} \left\lfloor \frac{B_i}{\delta_{\min,i}} \right\rfloor.
\]
Since $B_i$ is finite and each step consumes at least $\delta_{\min,i}$, no path can exceed $\Lambda$ steps.
In particular, the frontier $\mathcal{F}(t)$ remains finite at every step.
\end{definition}

\paragraph{Running example (continued).}
The running example satisfies Assumption~\ref{ass:min-increment} with $I_{\min} = \{2\}$ and $\delta_{\min,2} = 1\,\mathrm{km}$: every edge in the network increases the length $c_2$ by at least $1\,\mathrm{km}$, so no path can extend indefinitely.
With budget $B_2 = 4$, the maximum number of steps is $\Lambda = \lfloor 4/1 \rfloor = 4$.

\newpage

\section{Dominance Geometry}
\label{sec:dominance}

We now develop the dominance and layering structure of the frontier.
All definitions in this section operate within the traversal instance fixed in Section~\ref{sec:model}.

\subsection{Per-Signature Frontier and Dominance}
\label{sec:dominance-layers}

Because paths with different signatures face different future cost dynamics, dominance comparisons are only meaningful among paths that share the same signature.
We therefore partition the frontier by signature before defining dominance.

\begin{definition}[Per-signature frontier]
\label{def:per-signature-frontier}
For each signature $\sigma \in \Sigma$, the \emph{per-signature frontier} is
\[
\mathcal{F}_\sigma(t) := \{p \in \mathcal{F}(t) : \sig(p) = \sigma\}.
\]
The global frontier decomposes as $\mathcal{F}(t) = \biguplus_{\sigma} \mathcal{F}_\sigma(t)$.
Let $\Sigma_\mathcal{F}(t) := \{\sigma \in \Sigma : \mathcal{F}_\sigma(t) \neq \emptyset\}$ denote the set of active signatures at time $t$.
\end{definition}

Within each per-signature frontier, we compare paths by componentwise cost.

\begin{definition}[Dominance]
\label{def:dominance}
For $p,q \in \mathcal{F}_\sigma(t)$, $p$ \emph{weakly dominates} $q$, written $p \preceq q$, if
\[
C(p)_i \le C(q)_i \quad \forall\, i.
\]
If in addition strict inequality holds for at least one component, $p$ \emph{strictly dominates} $q$, written $p \prec q$.
\end{definition}

\subsection{Pareto Layers}

Dominance induces a natural stratification of each per-signature frontier into successive non-dominated shells.
Removing the first non-dominated set reveals a second, and so on, yielding a sequence of \emph{Pareto layers} that will serve as the basis for scheduling.

\begin{definition}[Pareto layers]
\label{def:pareto-layers}
Within each per-signature frontier $\mathcal{F}_\sigma(t)$, the successive \emph{Pareto layers} are defined by iterative non-dominated extraction.
The first layer --- the \emph{skyline} --- is the set of non-dominated elements of the frontier:
\[
\mathcal{L}_{1,\sigma}(t) := \{ p \in \mathcal{F}_\sigma(t) \mid \nexists q \in \mathcal{F}_\sigma(t),\; q \prec p \}.
\]
For $k \ge 1$, the \emph{residual set} at depth $k$ and the subsequent layer are defined by
\[
\mathcal{R}_{k+1,\sigma}(t) := \mathcal{R}_{k,\sigma}(t) \setminus \mathcal{L}_{k,\sigma}(t), \qquad
\mathcal{L}_{k+1,\sigma}(t) := \{ p \in \mathcal{R}_{k+1,\sigma}(t) \mid \nexists q \in \mathcal{R}_{k+1,\sigma}(t),\; q \prec p \},
\]
where $\mathcal{R}_{1,\sigma}(t) := \mathcal{F}_\sigma(t)$.
The global $k$-th Pareto layer is the union over all active signatures:
\[
\mathcal{L}_k(t) := \bigcup_{\sigma \in \Sigma_\mathcal{F}(t)} \mathcal{L}_{k,\sigma}(t).
\]
This yields the disjoint decomposition $\mathcal{F}(t) = \biguplus_{k \ge 1} \mathcal{L}_k(t)$.
\end{definition}

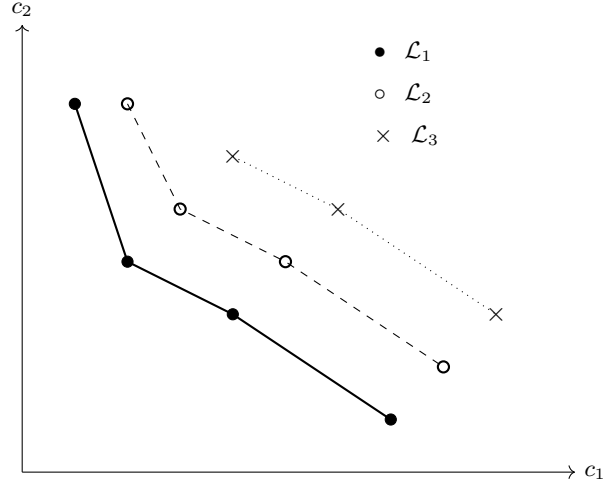
\begin{figure}[h!]
\centering
\begin{tikzpicture}[scale=0.7, every node/.style={font=\small}]
  \draw[->] (0,0) -- (10.5,0) node[right] {$c_1$};
  \draw[->] (0,0) -- (0,8.5) node[above] {$c_2$};

  \draw[thick] (1,7) -- (2,4) -- (4,3) -- (7,1);
  \filldraw (1,7) circle (3pt);
  \filldraw (2,4) circle (3pt);
  \filldraw (4,3) circle (3pt);
  \filldraw (7,1) circle (3pt);

  \draw[dashed] (2,7) -- (3,5) -- (5,4) -- (8,2);
  \draw[thick] (2,7) circle (3pt);
  \draw[thick] (3,5) circle (3pt);
  \draw[thick] (5,4) circle (3pt);
  \draw[thick] (8,2) circle (3pt);

  \draw[dotted] (4,6) -- (6,5) -- (9,3);
  \node[font=\normalsize] at (4,6) {$\times$};
  \node[font=\normalsize] at (6,5) {$\times$};
  \node[font=\normalsize] at (9,3) {$\times$};

  \node[right] at (6.5,8) {$\bullet$ \; $\mathcal{L}_1$};
  \node[right] at (6.5,7.2) {$\circ$ \; $\mathcal{L}_2$};
  \node[right] at (6.5,6.4) {$\times$ \; $\mathcal{L}_3$};
\end{tikzpicture}
\caption{Schematic: Pareto layer decomposition in two-dimensional cost space (single signature).
Filled discs form the first layer $\mathcal{L}_1$ (skyline); removing them reveals the second layer $\mathcal{L}_2$ (open circles), and so on.
Lines connect consecutive points within each layer, revealing the successive shells.}
\label{fig:pareto-layers}
\end{figure}

\paragraph{Running example (continued).}
Returning to the network of Figure~\ref{fig:running-graph}, the three $s$-to-$t$ paths, their accumulated costs, and signatures are:

\medskip
\begin{center}
\begin{tabular}{@{}lcccc@{}}
\toprule
\textbf{Path} & \textbf{Complexity} ($c_1$) & \textbf{Length} ($c_2$) & \textbf{Zone Switch} ($c_3$) & \textbf{Signature} \\
\midrule
$p_1 = s \xrightarrow{\mathsf{Z_1}} a \xrightarrow{\mathsf{Z_1}} t$ & 2 & 4 & 0 & $(t,\mathsf{Z_1})$ \\
$p_2 = s \xrightarrow{\mathsf{Z_2}} b \xrightarrow{\mathsf{Z_2}} t$ & 1 & 2 & 0 & $(t,\mathsf{Z_2})$ \\
$p_3 = s \xrightarrow{\mathsf{Z_1}} a \xrightarrow{\mathsf{Z_2}} b \xrightarrow{\mathsf{Z_2}} t$ & 1 & 4 & 1 & $(t,\mathsf{Z_2})$ \\
\bottomrule
\end{tabular}
\end{center}
\medskip

\noindent Path $p_3$ incurs a zone-switching cost of $1$ at the transition from edge $s \to a$ (zone~$\mathsf{Z_1}$) to edge $a \to b$ (zone~$\mathsf{Z_2}$).
Because the signature includes the last edge zone, $p_1$ has signature $(t,\mathsf{Z_1})$ while $p_2$ and $p_3$ share signature $(t,\mathsf{Z_2})$.
Dominance comparisons are only meaningful within the same signature (Definition~\ref{def:per-signature-frontier}).
Within signature $(t,\mathsf{Z_2})$: $p_2 = (1,2,0)$ dominates $p_3 = (1,4,1)$ componentwise, so $p_3$ is relegated to the second layer.
Hence $\mathcal{L}_{1,(t,\mathsf{Z_1})} = \{p_1\}$ and $\mathcal{L}_{1,(t,\mathsf{Z_2})} = \{p_2\}$, giving $|\mathcal{L}_1| = 2$.
This illustrates per-signature layering: although $p_2$ dominates $p_3$ within signature $(t,\mathsf{Z_2})$, the structurally distinct path $p_1$ is preserved because it belongs to a different signature $(t,\mathsf{Z_1})$.
This decomposition highlights the structural role of the first Pareto layer: it gathers all minimal trade-offs that cannot be improved simultaneously across all cost dimensions.

\subsection{Cost Quantization}
\label{sec:cost-quantization}

To use the skyline as a scheduling mechanism, we need to control how many elements it can contain at any given time.
Each Pareto layer is finite---the cost grid is finite and no two layer elements with the same signature share the same cost vector---but the number of incomparable vectors can still grow with the grid size.
By partitioning the cost space into resolution bins, we can count the maximum number of structurally distinct positions a layer can occupy, which will yield an explicit bound on layer width.

Recall that the budget $B = (B_1, \dots, B_d)$ bounds each cost dimension, and each cost grid $\mathcal{G}_i$ lives in $[0, B_i]$.
A quantization maps each interval $[0, B_i]$ to a finite set of integer levels.

\begin{definition}[Cost quantization]
\label{def:quantization}
A \emph{cost quantization} is a vector $\varphi = (\varphi_1,\dots,\varphi_d)$ where each $\varphi_i : [0, B_i] \to \{0,1,\dots,m_i\}$ is non-decreasing with $\varphi_i(0) = 0$ and $\varphi_i(B_i) = m_i$.
The integer $m_i$ represents the number of resolution intervals in dimension~$i$.
It induces the following concepts:
\begin{enumerate}[label=(\alph*),nosep]
\item The \emph{cell} at level $j$ in dimension~$i$ is $\mathcal{C}_{i,j} := \varphi_i^{-1}(j) = \{x \in [0,B_i] : \varphi_i(x) = j\}$.
\item The \emph{bin index vector} of a path $p$ is
$\beta(p) := \bigl(\varphi_1(C(p)_1),\;\dots,\;\varphi_d(C(p)_d)\bigr) \in \{0,\dots,m_1\}\times\cdots\times\{0,\dots,m_d\}$.
\item The total number of bin index vectors is $B^* := \prod_{i=1}^d (m_i + 1)$.
\end{enumerate}
\end{definition}

\subsection{Cost-Faithful Quantization}

The quantization $\varphi$ introduced above is general.
Since the cost grid $\mathcal{G}_i$ is finite and already ordered, the quantization can always be chosen fine enough to separate all grid levels---this is a natural requirement, not a restrictive one, because collapsing two distinct grid values into the same bin would discard information that the cost model explicitly encodes.
We formalize this as our third structural assumption.

\begin{assumption}[Cost-faithful quantization]
\label{ass:dominance-monotone}
The quantization $\varphi$ separates all grid levels: for each dimension $i$, $\varphi_i$ is strictly monotone on $\mathcal{G}_i$, i.e.\ for all $x, y \in \mathcal{G}_i$,
\[
x < y \;\;\Longrightarrow\;\; \varphi_i(x) < \varphi_i(y).
\]
\end{assumption}

This assumption leaves the choice of $\varphi$ open.
The simplest quantization that satisfies it---the \emph{rank quantization}---assigns each grid value its ordinal position: since $\mathcal{G}_i$ is finite and totally ordered, this produces the finest possible binning with no information loss.
Concretely, define $\mathrm{rank}_i : \mathcal{G}_i \to \{0,\dots,|\mathcal{G}_i| - 1\}$ as the ordinal rank in $\mathcal{G}_i$, and extend to $[0, B_i]$ by
\[
\varphi_i(x) \;:=\; \max\bigl\{\, \mathrm{rank}_i(y) : y \in \mathcal{G}_i,\; y \le x \,\bigr\}.
\]
Then $\varphi_i$ is non-decreasing on $[0, B_i]$, $\varphi_i(0) = 0$, $m_i = |\mathcal{G}_i| - 1$, and
\[
B^* \;=\; \prod_{i=1}^d |\mathcal{G}_i|.
\]
The rank quantization satisfies Assumption~\ref{ass:dominance-monotone} by construction: since $\mathrm{rank}_i$ is the ordinal position in the finite ordered set $\mathcal{G}_i$, distinct grid values $x < y$ in $\mathcal{G}_i$ receive distinct ranks $\mathrm{rank}_i(x) < \mathrm{rank}_i(y)$, so $\varphi_i$ is strictly monotone on $\mathcal{G}_i$.


\section{Layered Pareto Geometry for Frontier Scheduling}
\label{sec:pareto-geometry}

The cost model, the Pareto layer decomposition, and the cost quantization introduced in the previous sections provide the raw ingredients.
We now derive three geometric properties of the layered frontier that together make it viable as a scheduling mechanism.

The first property ensures that the layer decomposition has no gaps: whenever the frontier is non-empty, the skyline is non-empty, so a skyline-only policy always has an element to extract.

\begin{lemma}[Layer contiguity]
\label{lem:layer-contiguity}
If $\mathcal{F}(t) \neq \emptyset$, then $\mathcal{L}_1(t) \neq \emptyset$.
More generally, if $\mathcal{L}_k(t) = \emptyset$ then $\mathcal{L}_j(t) = \emptyset$ for all $j \ge k$.
\end{lemma}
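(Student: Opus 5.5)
The argument works signature by signature and rests on one fact: strict dominance $\prec$ restricted to a per-signature frontier is a strict partial order, and every finite non-empty set under a strict partial order has a minimal element. Irreflexivity holds because $p \prec p$ would require $C(p)_i < C(p)_i$ for some $i$. Transitivity follows from componentwise $\le$ together with one strict component. Finiteness of $\mathcal{F}_\sigma(t)$ comes from Definition~\ref{def:min-increment-vector}, which shows the frontier is finite. Assumption~\ref{ass:frontier-invariant} additionally bounds each $\mathcal{F}_\sigma(t)$ by $|\mathcal{G}|$, but only finiteness is needed.

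For the first claim, suppose $\mathcal{F}(t) \neq \emptyset$. Then some active signature $\sigma \in \Sigma_\mathcal{F}(t)$ has $\mathcal{F}_\sigma(t) \neq \emptyset$. To produce a minimal element, start from any $p_0 \in \mathcal{F}_\sigma(t)$. While the current element is strictly dominated by some other element, replace it by a dominator. This builds a chain $p_0 \succ p_1 \succ \cdots$. By transitivity and irreflexivity the elements of the chain are pairwise distinct, so finiteness forces the chain to stop. It stops at an element $p^\ast$ that no $q \in \mathcal{F}_\sigma(t)$ strictly dominates, so $p^\ast \in \mathcal{L}_{1,\sigma}(t) \subseteq \mathcal{L}_1(t)$.

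An alternative is to take $p$ minimizing $\sum_i C(p)_i$ over the finite set. Since $q \prec p$ implies $\sum_i C(q)_i < \sum_i C(p)_i$, this $p$ is non-dominated. This version is shorter and also avoids the chain argument, so I would probably use it.

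For the general claim, I would prove per signature that $\mathcal{L}_{k,\sigma}(t) = \emptyset$ implies $\mathcal{R}_{k,\sigma}(t) = \emptyset$. This is the same minimal-element argument applied to $\mathcal{R}_{k,\sigma}(t)$, using that $\mathcal{L}_{k,\sigma}(t)$ is exactly the non-dominated subset of $\mathcal{R}_{k,\sigma}(t)$: if $\mathcal{R}_{k,\sigma}(t)$ were non-empty, its sum-minimizer would lie in $\mathcal{L}_{k,\sigma}(t)$. Then $\mathcal{R}_{k+1,\sigma}(t) = \mathcal{R}_{k,\sigma}(t) \setminus \mathcal{L}_{k,\sigma}(t) = \emptyset$, hence $\mathcal{L}_{k+1,\sigma}(t) \subseteq \mathcal{R}_{k+1,\sigma}(t)$ is empty. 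Induction on $j \ge k$ gives $\mathcal{L}_{j,\sigma}(t) = \emptyset$ for all $j \ge k$. Finally, the global layer $\mathcal{L}_k(t)$ is empty exactly when every per-signature layer $\mathcal{L}_{k,\sigma}(t)$ is empty, since it is their union. Applying the per-signature result to each $\sigma$ and taking unions yields the global statement.

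I do not expect a real obstacle; the only point needing care is the passage from global to per-signature layers. The statement's hypothesis that $\mathcal{L}_k(t)$ is empty means every per-signature layer $\mathcal{L}_{k,\sigma}(t)$ is empty. The per-signature induction then applies uniformly to each $\sigma$. I would write this step out explicitly so the union argument is clearly valid.
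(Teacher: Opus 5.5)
Your proposal is correct and follows essentially the same argument as the paper. Working per signature, a finite non-empty $\mathcal{R}_{k,\sigma}(t)$ always has a non-dominated element, so $\mathcal{L}_{k,\sigma}(t)=\emptyset$ forces $\mathcal{R}_{k,\sigma}(t)=\emptyset$; this emptiness propagates to all later residuals and layers, and the union over $\sigma$ gives the global claim. Your sum-minimizer argument only makes explicit the existence of a non-dominated element, which the paper asserts without proof.
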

\begin{proof}
Since $\mathcal{F}(t)$ is finite, every non-empty subset admits a non-dominated element, so $\mathcal{L}_1(t) \neq \emptyset$.
For the second claim, note that for each signature $\sigma$, $\mathcal{L}_{k,\sigma}(t)$ is the set of non-dominated elements of $\mathcal{R}_{k,\sigma}(t)$.
If $\mathcal{L}_k(t) = \emptyset$, then $\mathcal{L}_{k,\sigma}(t) = \emptyset$ for every $\sigma$. If $\mathcal{R}_{k,\sigma}(t) \neq \emptyset$, since $\mathcal{R}_{k,\sigma}(t) \subseteq \mathcal{F}(t)$ is finite, it admits a non-dominated element contradicting $\mathcal{L}_{k,\sigma}(t) = \emptyset$.
Hence $\mathcal{R}_{k,\sigma}(t) = \emptyset$ for every $\sigma$.
By definition, $\mathcal{R}_{k+1,\sigma}(t) = \mathcal{R}_{k,\sigma}(t) \setminus \mathcal{L}_{k,\sigma}(t) = \emptyset$ for every $\sigma$, so $\mathcal{L}_{j,\sigma}(t) = \emptyset$ for all $j \ge k$ and all $\sigma$.
Taking the union over $\sigma$ gives $\mathcal{L}_j(t) = \emptyset$ for all $j \ge k$.
\end{proof}

The second property connects the cost quantization to the layer structure: within any single layer and signature, no two elements can occupy the same resolution bin.
This is a direct consequence of the frontier representation invariant (Assumption~\ref{ass:frontier-invariant}) and the faithfulness of the quantization (Assumption~\ref{ass:dominance-monotone}).

\begin{lemma}[Bin exclusivity]
\label{lem:bin-exclusivity}
For each signature $\sigma$, every per-signature Pareto layer $\mathcal{L}_{k,\sigma}$ contains at most one element per bin index vector.
That is, if $p,q \in \mathcal{L}_{k,\sigma}$ and $\beta(p)=\beta(q)$, then $p=q$.
\end{lemma}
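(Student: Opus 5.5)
The plan is a direct argument: equality of bin index vectors forces equality of cost vectors, and then the frontier representation invariant (Assumption~\ref{ass:frontier-invariant}) forces the two paths to coincide. Take $p,q \in \mathcal{L}_{k,\sigma}(t)$ with $\beta(p)=\beta(q)$. By construction of the per-signature layers (Definition~\ref{def:pareto-layers}), $\mathcal{L}_{k,\sigma}(t) \subseteq \mathcal{F}_\sigma(t)$, so $\sig(p)=\sig(q)=\sigma$.

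The first step is to check that costs lie on the grid, i.e.\ $C(p)_i, C(q)_i \in \mathcal{G}_i$ for every $i$. I will show this by induction on path length. The base case is $C(\emptyset)=\mathbf{0}$ with $0 \in \mathcal{G}_i$. For the inductive step, each transition $\delta_i$ maps $\Sigma \times E \times \mathcal{G}_i$ into $\mathcal{G}_i$ (Definition~\ref{def:lifted-markovian}(d)), so every extension stays on the grid.

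The second step uses Assumption~\ref{ass:dominance-monotone}. In dimension $i$ we have $\varphi_i(C(p)_i) = \varphi_i(C(q)_i)$, and both arguments lie in $\mathcal{G}_i$. Strict monotonicity of $\varphi_i$ on $\mathcal{G}_i$ makes $\varphi_i$ injective there: if $C(p)_i \neq C(q)_i$, say $C(p)_i < C(q)_i$, then $\varphi_i(C(p)_i) < \varphi_i(C(q)_i)$, which is a contradiction. Hence $C(p)_i = C(q)_i$ for all $i$, so $C(p)=C(q)$.

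The final step combines this with $\sig(p)=\sig(q)$. Assumption~\ref{ass:frontier-invariant} says the frontier keeps at most one representative per pair (signature, cost vector), so $p=q$ as frontier elements.

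I expect the only delicate point to be the grid-membership step. Assumption~\ref{ass:dominance-monotone} gives injectivity only on $\mathcal{G}_i$, not on all of $[0,B_i]$, so this step is exactly what licenses applying it. I will also note that Pareto-layer structure plays no real role: the argument works for any subset of $\mathcal{F}_\sigma(t)$.
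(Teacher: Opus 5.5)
Your proof is correct and matches the paper's argument. Equal bin vectors give equal grid costs because $\varphi_i$ is injective on $\mathcal{G}_i$ (Assumption~\ref{ass:dominance-monotone}), and equal signature--cost pairs collapse to one representative (Assumption~\ref{ass:frontier-invariant}). Your explicit induction that $C(p)_i \in \mathcal{G}_i$ (from $C(\emptyset)=\mathbf{0}$ and $\delta_i$ mapping into $\mathcal{G}_i$) fills in a step the paper only asserts, and your remark that the layer structure plays no role is also right.
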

\begin{proof}
Let $p,q \in \mathcal{L}_{k,\sigma}$ with $\beta(p) = \beta(q)$.
Then for each dimension~$i$, $\varphi_i(C(p)_i) = \varphi_i(C(q)_i)$.
Since $C(p)_i, C(q)_i \in \mathcal{G}_i$ and $\varphi_i$ is strictly monotone on $\mathcal{G}_i$ (Assumption~\ref{ass:dominance-monotone}), hence injective, we obtain $C(p)_i = C(q)_i$ for all~$i$, i.e.\ $C(p) = C(q)$.
Since $\sig(p) = \sig(q) = \sigma$, the frontier representation invariant (Assumption~\ref{ass:frontier-invariant}) gives $p = q$.
\end{proof}

Combining these two lemmas yields the central geometric result of the section: each Pareto layer has a width bounded by the cost-grid geometry, independently of how many paths the search has generated.
This is what makes the skyline a tractable decision space.

\begin{proposition}[Uniform layer width bound]
\label{thm:layer-width}
Define the \emph{skyline width} $W(t) := |\Sigma_\mathcal{F}(t)|\cdot B^*$.
\begin{enumerate}
\item \emph{Per-signature bound.} Every per-signature Pareto layer satisfies
\[
|\mathcal{L}_{k,\sigma}(t)|
\;\le\;
B^*
\quad \text{for every } k\ge 1,\; \sigma \in \Sigma_\mathcal{F}(t).
\]
\item \emph{Global layer bound.} The global Pareto layer satisfies
\[
|\mathcal{L}_k(t)|
\;\le\;
W(t)
\quad \text{for every } k\ge 1.
\]
\end{enumerate}
This bound holds after every expansion step, regardless of how many new frontier elements the expansion generates.
\end{proposition}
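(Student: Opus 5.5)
The per-signature bound is a pigeonhole argument on top of Lemma~\ref{lem:bin-exclusivity}. Fix a time $t$, a depth $k \ge 1$ and an active signature $\sigma \in \Sigma_\mathcal{F}(t)$. I will show that the bin index map $\beta$ restricted to $\mathcal{L}_{k,\sigma}(t)$ is injective. Lemma~\ref{lem:bin-exclusivity} gives exactly this: two elements of the same per-signature layer with equal bin index vectors coincide. The map $\beta$ takes values in $\{0,\dots,m_1\}\times\cdots\times\{0,\dots,m_d\}$, and this set has cardinality $B^*$ by Definition~\ref{def:quantization}. Injectivity into a set of size $B^*$ then yields $|\mathcal{L}_{k,\sigma}(t)| \le B^*$.

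One detail must be checked before invoking the lemma: every frontier path has its cost vector in the grid $\mathcal{G}$. The proof of Lemma~\ref{lem:bin-exclusivity} uses $C(p)_i \in \mathcal{G}_i$. I will verify this by induction on path length. The base case is $C(\emptyset)=\mathbf 0$ with $0 \in \mathcal{G}_i$. The inductive step uses the fact that $\delta_i$ maps $\mathcal{G}_i$ into $\mathcal{G}_i$ (Definition~\ref{def:lifted-markovian}(d)). Budget pruning only removes elements, so it cannot break this property. No other nontrivial step is needed for part 1.

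For the global bound, I use Definition~\ref{def:pareto-layers}, which writes $\mathcal{L}_k(t)$ as the union of $\mathcal{L}_{k,\sigma}(t)$ over $\sigma \in \Sigma_\mathcal{F}(t)$. These pieces are in fact disjoint, since signatures partition the frontier (Definition~\ref{def:per-signature-frontier}). Disjointness is not even required, because the union bound suffices: $|\mathcal{L}_k(t)| \le \sum_{\sigma\in\Sigma_\mathcal{F}(t)} |\mathcal{L}_{k,\sigma}(t)| \le |\Sigma_\mathcal{F}(t)|\cdot B^* = W(t)$. Signatures outside $\Sigma_\mathcal{F}(t)$ contribute empty layers and can be ignored.

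Finally, the statement that the bound holds after every expansion step, whatever the number of successors, needs no separate argument. The argument above applies to an arbitrary time $t$ and uses only two things: the frontier invariant (Assumption~\ref{ass:frontier-invariant}), which the insertion-time pruning rule keeps in force after each expansion, and the cost-faithfulness of $\varphi$ (Assumption~\ref{ass:dominance-monotone}). The only point that needs care is the grid-membership check described above; the remainder is counting.
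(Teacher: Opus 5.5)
Your proposal is correct and follows the paper's proof. Lemma~\ref{lem:bin-exclusivity} makes $\beta$ injective on each per-signature layer, which gives at most $B^*$ elements, and summing over the at most $|\Sigma_\mathcal{F}(t)|$ active signatures gives $W(t)$. Your check that frontier costs lie in $\mathcal{G}$, by induction using $\delta_i : \Sigma \times E \times \mathcal{G}_i \to \mathcal{G}_i$, spells out a step the paper's lemma uses without stating.
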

\begin{proof}
By Lemma~\ref{lem:bin-exclusivity}, each per-signature layer $\mathcal{L}_{k,\sigma}(t)$ contains at most one element per bin index vector, so $|\mathcal{L}_{k,\sigma}(t)| \le B^*$.
Since $\mathcal{L}_k(t) = \biguplus_{\sigma} \mathcal{L}_{k,\sigma}(t)$ is a disjoint union over at most $|\Sigma_\mathcal{F}(t)|$ signatures, we obtain $|\mathcal{L}_k(t)| \le W(t)$.
\end{proof}

Writing $W := \max_t W(t)$ for the peak skyline width, this bound is the structural bottleneck of the entire framework: all subsequent scheduling, coverage, and termination guarantees operate within a decision space of size at most $W$ per layer. Moreover, under the rank quantization (Section~\ref{sec:cost-quantization}), $m_i = |\mathcal{G}_i| - 1$ and
\[
B^* \;=\; \prod_{i=1}^d |\mathcal{G}_i|.
\]
So each Pareto layer can therefore contain at most as many elements (per signature) as there are cells in the cost grid.
Crucially, $B^*$ is a design parameter: choosing a coarser quantization (fewer levels $m_i$ per dimension) reduces $B^*$ and hence the per-layer width, at the cost of lower resolution in the stopping certificate (Section~\ref{sec:certified-stopping}).

\paragraph{Running example (continued).}
In the running example (Figure~\ref{fig:running-graph}), the cost grids are $\mathcal{G}_1 = \{0,1,2\}$ ($|\mathcal{G}_1| = 3$), $\mathcal{G}_2 = \{0,1,2,3,4\}$ ($|\mathcal{G}_2| = 5$), and $\mathcal{G}_3 = \{0,1\}$ ($|\mathcal{G}_3| = 2$).
Under the rank quantization, $B^* = 3 \times 5 \times 2 = 30$.
At the terminal frontier, two signatures are active: $(t,\mathsf{Z_1})$ and $(t,\mathsf{Z_2})$, so $|\Sigma_\mathcal{F}| = 2$ and $W = 2 \times 30 = 60$.
In practice the actual skyline is far smaller ($|\mathcal{L}_1| = 2$).
The gap between $W$ and the observed skyline size reflects the purely structural nature of the bound; tighter bounds incorporating graph-specific constraints are left for future work.

We now define the extraction policy that leverages this layered structure.

\section{Skyline-First Frontier Scheduling}
\label{sec:layered-frontier-scheduling}

The previous sections established that the frontier decomposes into Pareto layers of bounded width, that the skyline is always non-empty when the frontier is, and that cost quantization provides a finite grid of resolution bins.
These geometric properties make it possible to define an extraction policy that draws exclusively from the first Pareto layer and distributes its extractions across cost space---without requiring any external guidance.
We now define this policy and show in the subsequent sections that it inherits strong structural guarantees from the layered geometry: deterministic progress toward feasible solutions, certified termination, and cost-space coverage.

\subsection{Extraction Policy}

\begin{definition}[Skyline-First coverage policy]
\label{def:skyline-first}
Let $\hat{\beta}(t)$ denote the set of resolution bin indices already extracted by time $t$.
The \emph{Skyline-First policy} selects at each step an element
\[
\pi_{\mathrm{SF}}\bigl(\mathcal{F}(t),\,\hat{\beta}(t)\bigr)
\;\in\;
\operatorname*{arg\,min}_{p \,\in\, \mathcal{L}_1(t)}
\mathbf{1}\!\bigl(\beta(p) \in \hat{\beta}(t)\bigr).
\]
where $\mathbf{1}(\cdot)$ denotes the indicator function.
Among all elements of the current skyline, extract one whose resolution bin has not yet been extracted; if all bins are already represented, extract any skyline element.
Remaining ties are broken arbitrarily; none of the subsequent results depend on the tie-breaking choice.
\end{definition}

By Lemma~\ref{lem:layer-contiguity}, $\mathcal{L}_1(t)$ is non-empty whenever $\mathcal{F}(t)$ is non-empty, so $\pi_{\mathrm{SF}}$ is well-defined at every non-terminal step.

The policy combines two mechanisms that play orthogonal roles.
Skyline restriction (extracting only from $\mathcal{L}_1(t)$) drives the deterministic frontier-potential descent (Section~\ref{sec:deterministic-guarantees}); the power comes from the layered structure, not the policy itself.
Bin-coverage tie-breaking promotes geometric dispersion across cost space (Section~\ref{sec:coverage-optimality}).

\paragraph{Running example (continued).}
We trace the first steps of Skyline-First on the network of Figure~\ref{fig:running-graph}, starting with $\mathcal{F}(0) = \{(s,\bot,(0,0,0))\}$ and $\hat{\beta} = \emptyset$.

\emph{Step~1.}
The skyline is $\mathcal{L}_1 = \{(s,\bot,(0,0,0))\}$.
Its bin $\beta = (0,0,0)$ is uncovered, so the policy selects it.
Expansion produces two successors: $(a,\mathsf{Z_1},(1,2,0))$ and $(b,\mathsf{Z_2},(1,1,0))$.
Now $\hat{\beta} = \{(0,0,0)\}$.

\emph{Step~2.}
The skyline is $\mathcal{L}_1 = \{(a,\mathsf{Z_1},(1,2,0)),\; (b,\mathsf{Z_2},(1,1,0))\}$ — neither dominates the other since they have different signatures.
Both bins are uncovered; suppose the policy picks $(a,\mathsf{Z_1},(1,2,0))$.
Expansion yields $(t,\mathsf{Z_1},(2,4,0))$ — a complete feasible path, recorded as the first solution — and $(b,\mathsf{Z_2},(1,3,1))$ via the zone-switching edge $a \to b$.
Now $\hat{\beta} = \{(0,0,0),\; (1,2,0)\}$.

\emph{Step~3.}
The skyline now contains $(b,\mathsf{Z_2},(1,1,0))$ and $(b,\mathsf{Z_2},(1,3,1))$.
Within signature $(b,\mathsf{Z_2})$, $(1,1,0)$ dominates $(1,3,1)$, so only $(b,\mathsf{Z_2},(1,1,0))$ remains in the skyline.
Its bin is uncovered, so the policy extracts it and expands it toward~$t$.

At each step, the policy extracts from the skyline and prefers bins not yet visited, promoting dispersion across cost space as discussed in Section~\ref{sec:coverage-optimality}.

\subsection{Algorithm}

Under Assumption~\ref{ass:frontier-invariant}, each pair $(\sigma, c) \in \Sigma \times \mathcal{G}$ has at most one path representative in the frontier.
The frontier $\mathcal{F}$ can therefore be identified with a set of pairs $(\sigma, c)$, together with a representative map $\mathrm{rep} : \mathcal{F} \to \mathcal{P}(G)$ that returns the unique path for each pair.
The skyline $\mathcal{L}_1 \subseteq \mathcal{F}$ is the subset of non-dominated pairs --- where dominance only compares pairs with the same signature.
The search proceeds by two operations: $\mathrm{Insert}$ (Algorithm~\ref{alg:insert}) adds a newly generated path to the frontier, and $\mathrm{Extract}$ (Algorithm~\ref{alg:extract}) selects a frontier element to explore next.
Both maintain the invariant that $\mathcal{L}_1$ is the Pareto front of $\mathcal{F}$.

\begin{algorithm}[h!]
\caption{$\mathrm{Insert}(p, \mathcal{F}, \mathcal{L}_1, \mathrm{rep})$}\label{alg:insert}
\begin{algorithmic}[1]
\REQUIRE New path $p$ with $(\sig(p), C(p)) \notin \mathcal{F}$; frontier $\mathcal{F}$; skyline $\mathcal{L}_1$; representative map $\mathrm{rep}$
\STATE $\sigma^* \leftarrow \sig(p)$, \quad $c^* \leftarrow C(p)$
\STATE $\mathcal{F} \leftarrow \mathcal{F} \cup \{(\sigma^*, c^*)\}$, \quad $\mathrm{rep}(\sigma^*, c^*) \leftarrow p$
\STATE $\mathcal{L}_1 \leftarrow \{ (\sigma, c) \in \mathcal{F} \mid \nexists\, (\sigma, c') \in \mathcal{F},\; c' \prec c \}$ \hfill\COMMENT{only $\sigma^*$ affected}
\end{algorithmic}
\end{algorithm}

\begin{algorithm}[h!]
\caption{$\mathrm{Extract}(\mathcal{F}, \mathcal{L}_1, \mathrm{rep}, \hat{\beta})$}\label{alg:extract}
\begin{algorithmic}[1]
\REQUIRE Frontier $\mathcal{F}$; skyline $\mathcal{L}_1$; representative map $\mathrm{rep}$; covered bin set $\hat{\beta}$
\ENSURE Selected path $p^*$; updated $\hat{\beta}$
\IF{$\exists\, (\sigma, c) \in \mathcal{L}_1$ with $\beta(\mathrm{rep}(\sigma, c)) \notin \hat{\beta}$}
  \STATE Pick any such $(\sigma^*, c^*)$
\ELSE
  \STATE Pick any $(\sigma^*, c^*) \in \mathcal{L}_1$
\ENDIF
\STATE $p^* \leftarrow \mathrm{rep}(\sigma^*, c^*)$
\STATE $\mathcal{F} \leftarrow \mathcal{F} \setminus \{(\sigma^*, c^*)\}$
\STATE $\mathcal{L}_1 \leftarrow \{ (\sigma, c) \in \mathcal{F} \mid \nexists\, (\sigma, c') \in \mathcal{F},\; c' \prec c \}$ \hfill\COMMENT{only $\sigma^*$ affected}
\STATE $\hat{\beta} \leftarrow \hat{\beta} \cup \{\beta(p^*)\}$
\RETURN $p^*$, $\hat{\beta}$
\end{algorithmic}
\end{algorithm}

Algorithms~\ref{alg:insert} and~\ref{alg:extract} are a conceptual description of the search mechanics, not an implementation specification.
In particular, although $\mathcal{L}_1$ is defined globally, its update can be performed incrementally and only affects pairs with the modified signature $\sigma^*$, since dominance does not compare across signatures.

\section{Geometric Coverage and Skyline Dispersion}
\label{sec:coverage-optimality}

The Skyline-First policy prioritizes unexplored regions of the cost space through its bin-coverage tie-breaking rule.
While the core guarantees of the framework---deterministic descent and certified termination---do not rely on this mechanism, it plays an important role in shaping the geometric distribution of explored solutions.

\paragraph{Greedy coverage property.}
At each step, the policy selects an element from the current skyline $\mathcal{L}_1(t)$.
Among these candidates, it prioritizes elements whose resolution bin has not yet been extracted: if there exists $p \in \mathcal{L}_1(t)$ such that $\beta(p) \notin \hat{\beta}(t)$, then the policy selects such an element.
This induces a simple but important property: the policy never selects an element from an already-explored bin when an unexplored bin is available in the current skyline.
This follows directly from the definition of the selection rule (Definition~\ref{def:skyline-first}) and does not depend on any structural assumptions beyond those required to define the skyline.

\paragraph{Dispersion in cost space.}
The greedy coverage property induces a dispersion effect over the cost space.
Each extraction step expands the set of visited resolution bins whenever possible, ensuring that exploration is distributed across distinct regions of the cost grid.
Geometrically, the skyline represents a set of mutually non-dominated trade-offs.
By preferring previously unvisited bins within this set, the policy promotes a broad sampling of these trade-offs rather than repeatedly refining a narrow region.
This behavior contrasts with policies that rely on scalarization or lexicographic ordering, which may repeatedly select candidates concentrated in a specific region of the cost space.

\paragraph{Scope and limitations.}
The coverage property is inherently local: it applies to the current skyline snapshot and does not account for the evolution of the frontier over time.
In a dynamic run, each extraction modifies the frontier and may introduce new skyline elements with previously unseen bin indices.
As a result, the global coverage trajectory depends on the interaction between the skyline structure at each step, the generation of successors, and the distribution of reachable cost vectors.
Characterizing this interaction remains an open problem.

\paragraph{Role within the framework.}
The bin-coverage rule is not required for the correctness of the search.
The structural guarantees of the framework---convergence toward feasible solutions and certified termination---are entirely induced by the skyline restriction and the dominance structure of the frontier.
The coverage mechanism therefore plays a secondary but meaningful role: it influences how the search explores the cost space, promoting geometric diversity among explored solutions without affecting the underlying guarantees.
This separation highlights a key design principle: Pareto geometry governs correctness, while tie-breaking rules shape exploration behavior.

\section{Certified Stopping Criterion via Vector Lower Bounds}
\label{sec:certified-stopping}

A key question for any search process is: when can we stop?
In multi-criteria path search, the answer is not a fixed number of solutions but a structural condition on the frontier.
The intuition is the following: since costs accumulate monotonically and progressive dimensions enforce a minimum increment at every step (Assumption~\ref{ass:min-increment}), every path in the frontier carries a \emph{minimum future cost}---a vector lower bound on the cost of any feasible path that extends it.
If, for every skyline element, some already-discovered solution is at least as good as this lower bound in every dimension, then no future exploration can improve the solution set.
This yields not merely a stopping rule but a \emph{certificate}: a verifiable condition that, once satisfied, guarantees that the returned solutions dominate every feasible path that remains to be discovered.

\subsection{Minimum-Cost Lower Bound}

\begin{definition}[Discovered feasible paths]
\label{def:discovered}
Let $\mathcal{S}(t) \subseteq \mathcal{P}(G)$ denote the set of feasible paths discovered by the search up to time $t$.
\end{definition}

Any frontier element $p \in \mathcal{F}(t)$ requires at least one additional edge to reach $T$.
By Assumption~\ref{ass:min-increment}, each such edge increases the cost by at least $\delta_{\min}$ componentwise.
Therefore, for any $p \in \mathcal{F}(t)$ and any feasible extension $r$ of $p$:
\[
C(p \circ r) \;\ge\; C(p) + \delta_{\min} \quad \text{componentwise.}
\]
The quantity $C(p) + \delta_{\min}$ is thus a lower bound on the cost of $p \circ r$ for any feasible extension $r$ of~$p$.

\subsection{Certified Stopping Condition}

We can now formalize the stopping criterion outlined in the introduction to this section: the search may terminate when every skyline element is already covered by an existing solution.

\begin{definition}[Stopping certificate]
\label{def:stopping-certificate}
The search admits a \emph{resolution certificate} at time $t$ if for every skyline element $p \in \mathcal{L}_1(t)$, there exists a feasible path $s \in \mathcal{S}(t)$ such that
\[
C(s) \;\le\; C(p) + \delta_{\min} \quad \text{componentwise.}
\]
\end{definition}

Before proving the soundness of the certificate, we establish a persistence property: if a feasible path $p \notin \mathcal{S}(t)$ exists, then the frontier always contains a prefix from which it can still be reached.

\begin{lemma}[Frontier persistence of feasible paths]
\label{lem:frontier-persistence}
Let $p = e_1 \circ \cdots \circ e_m$ be a feasible path with $p \notin \mathcal{S}(t)$, with prefixes $p_k = e_1 \circ \cdots \circ e_k$.
Then the frontier contains an element that matches the signature and cost of some strict prefix of~$p$:
\[
\exists\, q \in \mathcal{F}(t),\; \exists\, k < m \;:\; (\sig(q), C(q)) = (\sig(p_k), C(p_k)).
\]
\end{lemma}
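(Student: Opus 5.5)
Since $p \notin \mathcal{S}(t)$, a trace of $p$ was never completed by the search, and we want to show that some prefix of it is still represented in $\mathcal{F}(t)$. The natural tool is an invariant over search steps, proved by induction on $t$, of the following form. For every feasible path $p$ not yet discovered, either some strict prefix state $(\sig(p_k), C(p_k))$, $k<m$, is present in $\mathcal{F}(t)$, or $p$ is ``covered'' by a discovered solution or a frontier element with equal state. Since the statement as written asserts only the first alternative, I would set up the invariant in terms of \emph{states} rather than paths. Say that the state $(\sig(p_k), C(p_k))$ is \emph{live} at time $t$ if it is in $\mathcal{F}(t)$, and \emph{expanded} if it was extracted at some earlier step. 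Let $k^*$ be the largest index such that all of $p_0, \dots, p_{k^*}$ have been expanded. Here $p_0$ is the trivial path at $s$, whose state $(s,\bot,\mathbf{0})$ is in $\mathcal{F}(0)$.

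The inductive step is as follows. When a state $(\sig(p_k), C(p_k))$ is extracted and expanded along $e_{k+1}$, the Markov rule of Definition~\ref{def:lifted-markovian} produces exactly the state $(\alpha(\sig(p_k), e_{k+1}), \delta(\cdot))=(\sig(p_{k+1}), C(p_{k+1}))$, whatever representative path was used. This successor respects the budget, because $C(p_{k+1}) \le C(p) \le B$ by monotonicity of $\delta_i$ ($\delta_i(\sigma,e,g)\ge g$). Hence it is not pruned by the budget rule. It is either inserted or rejected by Assumption~\ref{ass:frontier-invariant} only because the identical state is already present. So after expansion of $p_k$, the state of $p_{k+1}$ is either in the frontier or has already been expanded. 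If $k+1=m$, the state is that of a feasible path: $\sig$ refines the end node, so it ends in $T$. The corresponding representative is then recorded in $\mathcal{S}$, which is the case where $p$ is ``discovered up to state''.

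To conclude, take $k^*$ as above. If $k^*+1<m$, the state of $p_{k^*+1}$ was generated and not expanded, so it lies in $\mathcal{F}(t)$; this gives the required $q$ with $k=k^*+1<m$. If $k^*=-1$, take $k=0$, since $p_0$ is in $\mathcal{F}(0)$ and not yet extracted. The remaining case is that all strict prefixes were expanded. Then the state of $p$ itself was generated and a path with identical state and cost was recorded as a solution.

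The main obstacle is this last case. The lemma says $p\notin\mathcal{S}(t)$ rather than ``no discovered solution has $p$'s state and cost''. I would handle it by interpreting membership in $\mathcal{S}(t)$ up to the representative identification $(\sigma,c)\mapsto\mathrm{rep}(\sigma,c)$ used in the Algorithm section, under which paths with identical state are the same element. Otherwise the lemma would need that hypothesis made explicit. Either way, the certificate argument only needs the cost $C(p)$, which is then matched by a solution. A secondary subtlety is that the frontier deduplication may keep a different path than $p_k$. The argument is therefore phrased entirely in terms of states $(\sigma,c)$, which is exactly what the conclusion asks for.
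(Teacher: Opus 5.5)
Your proof is correct and uses the same ingredients as the paper's: the Markov rule propagates the prefix states of $p$ regardless of which representative is kept, the inequality $\delta_i(\sigma,e,g)\ge g$ keeps each successor within budget, deduplication preserves the state, and end-node refinement handles the terminal case. The only difference is packaging: you use a maximal-expanded-prefix argument where the paper inducts step by step on a witness index $k$. Your caveat about the last case is well taken, and the paper resolves it the same way, implicitly, when it says the recorded $q'$ with $C(q') = C(p)$ means $p$ ``has been discovered.''
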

\begin{proof}
We prove the invariant by induction on the extraction steps.

\emph{Base case.}\enspace
Before any extraction, the frontier is initialized with the trivial path at $s$, so $\mathcal{F}(0)$ contains an element $q$ with $(\sig(q), C(q)) = (\sig(p_0), C(p_0))$ and the invariant holds with $k = 0$.

\emph{Inductive step.}\enspace
Suppose the invariant holds after $t$ extractions, with $q \in \mathcal{F}(t)$ satisfying $(\sig(q), C(q)) = (\sig(p_k), C(p_k))$.
After extraction $t+1$, two cases arise:
\begin{enumerate}[label=(\roman*),nosep]
\item If $q$ is not the element extracted at step $t+1$, it remains in $\mathcal{F}(t+1)$ and the invariant persists with the same $k$.
\item If $q$ is the element extracted at step $t+1$, the algorithm generates a successor $q' = q \circ e_{k+1}$.
By the Markovian property (Definition~\ref{def:lifted-markovian}), the signature and cost of $q'$ are determined by those of $q$ and the edge $e_{k+1}$:
\[
\sig(q') = \alpha(\sig(q), e_{k+1}) = \alpha(\sig(p_k), e_{k+1}) = \sig(p_{k+1}),
\]
\[
C_i(q') = \delta_i(\sig(q), e_{k+1}, C_i(q)) = \delta_i(\sig(p_k), e_{k+1}, C_i(p_k)) = C_i(p_{k+1}).
\]
Feasibility of $p$ gives $C(q') = C(p_{k+1}) \le C(p) \le B$, so $q'$ satisfies the budget and is not discarded.
If $k + 1 = m$, then $q'$ reaches a node in $T$ (since $p$ does and they share the same signature), so the algorithm records $q'$ as a complete feasible path with $C(q') = C(p)$---meaning $p$ has been discovered, contradicting the premise.
If $k + 1 < m$, then $q'$ is inserted into the frontier.
By Assumption~\ref{ass:frontier-invariant}, if an element with the same $(\sigma, C)$ is already present, one representative is kept.
In either case, $\mathcal{F}(t+1)$ contains an element $q''$ with $(\sig(q''), C(q'')) = (\sig(q'), C(q')) = (\sig(p_{k+1}), C(p_{k+1}))$, and the invariant holds with $q''$ and $k + 1$.
\end{enumerate}
In both cases, the invariant is maintained as long as $p$ remains undiscovered.
\end{proof}

We now prove that the stopping certificate is sound: once it holds, no further exploration can produce a solution that dominates any element of the current set.

\begin{theorem}[Certified resolution-completeness]
\label{thm:certified-stop}
If the stopping certificate holds at time $t^*$, then for any feasible path $s \notin \mathcal{S}(t^*)$, there exists $s^* \in \mathcal{S}(t^*)$ such that $s^* \preceq s$.
\end{theorem}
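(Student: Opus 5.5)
The plan is to fix a feasible path $s \notin \mathcal{S}(t^*)$ and chain together three inequalities. First, Lemma~\ref{lem:frontier-persistence} gives a frontier element matching a strict prefix of $s$. Second, that element is dominated by a skyline element of the same signature. Third, the certificate applied to that skyline element yields a discovered solution. Throughout, $s^* \preceq s$ is read as the componentwise inequality $C(s^*) \le C(s)$. (The letter $s$ is overloaded with the source node; I will write the path as $s = e_1 \circ \cdots \circ e_m$ with prefixes $p_k$, as in the lemma.)

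\emph{Step 1 (reachable prefix).} Apply Lemma~\ref{lem:frontier-persistence} to $s$. This gives $q \in \mathcal{F}(t^*)$ and an index $k < m$ with $(\sig(q), C(q)) = (\sig(p_k), C(p_k))$.

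\emph{Step 2 (climb to the skyline).} Let $\sigma = \sig(q)$, so $q \in \mathcal{F}_\sigma(t^*)$. I claim there exists $q' \in \mathcal{L}_{1,\sigma}(t^*) \subseteq \mathcal{L}_1(t^*)$ with $C(q') \le C(q)$ componentwise. If $q$ is itself non-dominated, take $q' = q$. Otherwise pick $q_1 \prec q$ in $\mathcal{F}_\sigma(t^*)$, then $q_2 \prec q_1$, and so on. Strict dominance is transitive and irreflexive, so no element repeats. Since $\mathcal{F}_\sigma(t^*)$ is finite (Definition~\ref{def:min-increment-vector}), the chain terminates at a non-dominated element, and transitivity gives $C(q') \le C(q)$. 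This is essentially the finiteness argument already used in Lemma~\ref{lem:layer-contiguity}, sharpened so that the non-dominated element lies below the given $q$.

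\emph{Step 3 (certificate and lower bound).} The certificate at $t^*$ applied to $q'$ provides $s^* \in \mathcal{S}(t^*)$ with
\[
C(s^*) \le C(q') + \delta_{\min} \le C(p_k) + \delta_{\min}.
\]
It remains to show $C(p_k) + \delta_{\min} \le C(s)$. Since $k < m$, the path $s$ is obtained from $p_k$ by appending at least one edge. The edge $e_{k+1}$ gives $C_i(p_{k+1}) - C_i(p_k) \ge \delta_{\min,i}$ for $i \in I_{\min}$ (Assumption~\ref{ass:min-increment}), and $C_i(p_{k+1}) \ge C_i(p_k)$ for all $i$, since $\delta_i(\sigma,e,g) \ge g$. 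Each further edge also satisfies $\delta_i(\sigma,e,g) \ge g$, so by induction $C(s) \ge C(p_{k+1})$. Hence $C(s) \ge C(p_k) + \delta_{\min}$, which gives $C(s^*) \le C(s)$.

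I expect the main obstacle to be Step 2, the bookkeeping around the skyline. The certificate only quantifies over $\mathcal{L}_1(t^*)$, while persistence only produces an arbitrary frontier element, so the reduction from one to the other must be made explicit. This requires that dominance is compared within the fixed signature $\sigma$ and that the finiteness of the frontier is available. A smaller subtlety is that Lemma~\ref{lem:frontier-persistence} guarantees a \emph{strict} prefix ($k<m$). This is exactly what ensures at least one more edge and hence the $+\delta_{\min}$ slack in Step 3. It also means $q$ only matches $p_k$ in signature and cost, not as the same path, which suffices because the bound in Step 3 depends only on $C(p_k)$.
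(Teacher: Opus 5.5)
Your proof is correct and follows the paper's argument: the persistence lemma, then descent within the fixed signature to a skyline element using finiteness of the frontier, then the certificate combined with the $+\delta_{\min}$ slack from the strict prefix. The only cosmetic difference is that you derive $C(s) \ge C(p_k) + \delta_{\min}$ directly along the prefixes of $s$, whereas the paper transfers the suffix $e_{k+1}\circ\cdots\circ e_m$ to the frontier element via the Markov property and then invokes its minimum-cost lower bound; both routes give the same inequality.
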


\begin{proof}
Consider any feasible path $s = e_1 \circ \cdots \circ e_m$ with $s \notin \mathcal{S}(t^*)$, with prefixes $s_k = e_1 \circ \cdots \circ e_k$.
By Lemma~\ref{lem:frontier-persistence}, $\mathcal{F}(t^*)$ contains an element $p$ with $(\sig(p), C(p)) = (\sig(s_k), C(s_k))$ for some $k < m$.
By the Markovian property (Definition~\ref{def:lifted-markovian}), the suffix $r := e_{k+1} \circ \cdots \circ e_m$ is a feasible extension of $p$, and $C(p \circ r) = C(s_k \circ r) = C(s)$ since $(\sig(p), C(p)) = (\sig(s_k), C(s_k))$.
By the minimum-cost lower bound:
\[
C(s) \;=\; C(p \circ r) \;\ge\; C(p) + \delta_{\min} \quad \text{componentwise.}
\]

We now show that there exists a skyline element $q \in \mathcal{L}_{1,\sig(p)}(t^*)$ with $q \preceq p$.
If $p \in \mathcal{L}_{1,\sig(p)}(t^*)$, set $q := p$.
Otherwise, by definition of the Pareto layer decomposition, there exists $p' \in \mathcal{F}_{\sig(p)}(t^*)$ with $p' \prec p$.
Since $\mathcal{F}_{\sig(p)}(t^*)$ is finite, iterating this argument terminates at some $q \in \mathcal{L}_{1,\sig(p)}(t^*)$ with $q \preceq p$.

Since $C(q) \le C(p)$ componentwise:
\[
C(q) + \delta_{\min} \;\le\; C(p) + \delta_{\min} \;\le\; C(s).
\]

Since $q \in \mathcal{L}_{1,\sig(p)}(t^*) \subseteq \mathcal{L}_1(t^*)$ and the stopping certificate holds at $t^*$, there exists $s^* \in \mathcal{S}(t^*)$ with $C(s^*) \le C(q) + \delta_{\min}$ componentwise.
Therefore:
\[
C(s^*) \;\le\; C(q) + \delta_{\min} \;\le\; C(s) \quad \text{componentwise,}
\]
which gives $s^* \preceq s$.
\end{proof}
\paragraph{Running example (continued).}
Returning to the network of Figure~\ref{fig:running-graph}, suppose the search has discovered solution $p_2$ with $C(p_2) = (1,2,0)$.
With $\delta_{\min} = (0,1,0)$, the certificate checks whether $C(p_2) \le C(p) + (0,1,0)$ for every skyline element $p$.
If the only remaining skyline element has cost $(1,2,0)$, then $C(p_2) = (1,2,0) \le (1,3,0) = C(p) + \delta_{\min}$, and the certificate holds: no undiscovered path can improve upon $p_2$.

This guarantee holds at the certified stopping time and does not imply incremental Pareto optimality during the search.
The resolution of the certificate is determined by the minimum increment vector $\delta_{\min}$ (Assumption~\ref{ass:min-increment}), while quantization (Assumption~\ref{ass:dominance-monotone}) is only used to control the structural width of the Pareto layers.
Moreover, the certificate controls \emph{when} to stop but does not address \emph{how quickly} the search reaches a feasible solution.
The next section introduces a potential-theoretic framework that bounds the number of effective progress steps.

\section{Deterministic Frontier-Potential Descent}
\label{sec:deterministic-guarantees}

Section~\ref{sec:certified-stopping} established that a certificate based on $\delta_{\min}$ can detect when no further exploration is needed.
A natural concern remains: \emph{does restricting extraction to the skyline break the convergence dynamic toward a solution?}
We now answer this question negatively, showing that skyline restriction induces a deterministic descent of an integer-valued completion potential.

\begin{definition}[Completion potential]
\label{def:completion-potential}
The \emph{completion potential} of a path $p \in \mathcal{F}(t)$ is
\[
H(p) := \min \{ |r| : \text{$r$ is a feasible extension of $p$} \},
\]
where $|r|$ denotes the number of edges in $r$.
If $p$ is not completable, $H(p)=+\infty$.
\end{definition}

Note that $H(p)$ depends on the existence and length of future feasible extensions, which are not known during the search.
It is therefore not computable by the algorithm and serves purely as an analytical tool for bounding progress.

A key property of $H$ is that it respects dominance: if a path $p$ has lower cost than another path $q$ with the same signature, then $p$ has at least as much residual budget in every dimension, so every feasible extension of $q$ is also feasible for $p$.
In particular, $p$ can reach a target at least as quickly as $q$.

\begin{lemma}[Residual budget monotonicity]
\label{lem:residual-budget}
If $\sig(p) = \sig(q)$ and $C(p) \le C(q)$ componentwise, then every feasible extension of $q$ is also a feasible extension of $p$. In particular, $H(p) \le H(q)$.
\end{lemma}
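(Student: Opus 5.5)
The plan is to fix an arbitrary feasible extension $r = (e_1,\dots,e_m)$ of $q$ and show, by induction along the edges of $r$, that $p \circ r$ stays componentwise below $q \circ r$ in cost while sharing its signature. Feasibility of $p \circ r$ will then follow from feasibility of $q \circ r$. The required properties are exactly those built into Definition~\ref{def:lifted-markovian}: the signature evolves by $\alpha$, and each cost component evolves by $\delta_i$, which is monotone in its level argument.

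Write $p_j := p \circ (e_1,\dots,e_j)$ and $q_j := q \circ (e_1,\dots,e_j)$, so that $p_0 = p$ and $q_0 = q$. The invariant to maintain is
\[
\sig(p_j) = \sig(q_j), \qquad C(p_j) \le C(q_j) \text{ componentwise}.
\]
The base case $j=0$ is the hypothesis. For the step, $\sig(p_{j+1}) = \alpha(\sig(p_j), e_{j+1}) = \alpha(\sig(q_j), e_{j+1}) = \sig(q_{j+1})$. For each $i$,
\[
C_i(p_{j+1}) = \delta_i(\sig(p_j), e_{j+1}, C_i(p_j)) \le \delta_i(\sig(q_j), e_{j+1}, C_i(q_j)) = C_i(q_{j+1}).
\]
This uses equality of signatures and monotonicity of $\delta_i$ in $g$. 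Both $C_i(p_j)$ and $C_i(q_j)$ lie in $\mathcal{G}_i$, since costs start at $0 \in \mathcal{G}_i$ and $\delta_i$ maps the grid to itself, so $\delta_i$ is applied within its domain.

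The well-formedness point needs a separate check. The composition $p \circ r$ must be a path, i.e.\ $\mathrm{end}(p) = \mathrm{start}(r)$. This holds because $\sig$ refines the terminal node, so $\mathrm{end}(p) = \mathrm{end}(q) = \mathrm{start}(r)$. The empty-extension case $m = 0$ is covered trivially. We also need $\mathrm{start}(p \circ r) = s$. In the setting where the lemma is used, $p$ and $q$ are frontier elements and hence start at $s$, and I will state this implicitly as the standing convention.

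At $j=m$ the invariant gives $\mathrm{end}(p \circ r) = \mathrm{end}(q \circ r) \in T$, again by the refinement property, and $C(p \circ r) \le C(q \circ r) \le B$. So $r$ is a feasible extension of $p$. Every feasible extension of $q$ is therefore one of $p$, so the minimum defining $H(p)$ is taken over a superset and $H(p) \le H(q)$. This includes the case $H(q) = +\infty$ trivially.

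I expect no deep obstacle. The only delicate point is the bookkeeping just described: that the composition is defined, that the start node is $s$, and that the monotonicity of $\delta_i$ is applied with identical signature and edge arguments, which is why the signature equality must be carried through the induction.
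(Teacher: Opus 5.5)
Your proposal is correct and follows the paper's proof. Both run an induction along the edges of $r$ that carries equality of signatures and uses monotonicity of $\delta_i$ in its level argument to propagate $C(p\circ r_k)\le C(q\circ r_k)$. Both then conclude $H(p)\le H(q)$: the paper via a shortest feasible extension $r^*$ of $q$, you via a minimum over a superset, which amounts to the same thing. Your extra checks are composability via $\mathrm{end}(p)=\mathrm{end}(q)$, $\mathrm{end}(p\circ r)\in T$, and the requirement that $p$ start at $s$, which the lemma's statement silently assumes. The paper leaves these implicit, and they are worth keeping.
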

\begin{proof}
Let $v = \mathrm{end}(p) = \mathrm{end}(q)$ and let
$r = e_1 \circ \cdots \circ e_m$ be a feasible extension of $q$.
We show $C_i(p \circ r) \le C_i(q \circ r)$ for each $i$ by induction on the
edges of $r$.

Write $r_k = e_1 \circ \cdots \circ e_k$ for $0 \le k \le m$, so that
$r_0$ is the empty extension and $r_m = r$.

\emph{Base case.}\enspace
$C_i(p \circ r_0) = C_i(p) \le C_i(q) = C_i(q \circ r_0)$.

\emph{Inductive step.}\enspace
Suppose $C_i(p \circ r_{k-1}) \le C_i(q \circ r_{k-1})$.
Since $\sig(p) = \sig(q)$, the transition rule
$\sig(p \circ e) = \alpha(\sig(p), e)$ gives
$\sig(p \circ r_{k-1}) = \sig(q \circ r_{k-1})$ by an immediate
induction on~$k$. Both calls to $\delta_i$ therefore share the same
first two arguments $(\sig(p \circ r_{k-1}),\, e_k)$.
Monotonicity of $\delta_i$ in its third argument
(Definition~\ref{def:lifted-markovian}) then yields
\[
C_i(p \circ r_k)
= \delta_i\!\bigl(\sig(p \circ r_{k-1}),\; e_k,\; C_i(p \circ r_{k-1})\bigr)
\le \delta_i\!\bigl(\sig(q \circ r_{k-1}),\; e_k,\; C_i(q \circ r_{k-1})\bigr)
= C_i(q \circ r_k).
\]

Taking $k = m$ gives $C(p \circ r) \le C(q \circ r)$ componentwise.
Feasibility of $q \circ r$ means $C(q \circ r) \le B$, so
$C(p \circ r) \le C(q \circ r) \le B$ and $p \circ r$ is also feasible.

For the potential inequality, if $H(q) = +\infty$ the bound $H(p) \le H(q)$ is immediate.
Otherwise, let $r^*$ be a shortest feasible extension of $q$, so that $|r^*| = H(q)$.
By the above, $r^*$ is also a feasible extension of $p$, hence $H(p) \le |r^*| = H(q)$.
\end{proof}

The completion potential $H(p)$ varies across frontier elements and over time.
To bound the number of descent steps globally, we need a worst-case bound on $H$ over all skyline elements at any point during the search.

\begin{definition}[Bounded skyline potential]
\label{def:bounded-skyline-potential}
Define
\[
\bar{H}_1 \;:=\; \sup_{t,\; p \,\in\, \mathcal{L}_1(t),\; H(p)<+\infty} H(p).
\]
\end{definition}

This bound can be tightened using the dissipative structure of the cost model.

\begin{lemma}[Budget-driven potential bound]
\label{lem:budget-potential-bound}
Under Assumption~\ref{ass:min-increment}, $\bar{H}_1 \le \Lambda$.
\end{lemma}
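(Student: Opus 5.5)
The bound follows from comparing the length of a shortest feasible extension with the per-edge consumption of a progressive dimension. Fix any time $t$ and any skyline element $p \in \mathcal{L}_1(t)$ with $H(p) < +\infty$. Take a shortest feasible extension $r^*$ of $p$, so that $|r^*| = H(p)$ and $p \circ r^*$ is feasible, i.e.\ $C(p \circ r^*) \le B$ componentwise. It then suffices to show $|r^*| \le \Lambda$. Taking the supremum over $t$ and $p$ in Definition~\ref{def:bounded-skyline-potential} gives $\bar{H}_1 \le \Lambda$.

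To bound $|r^*|$, fix a progressive dimension $i \in I_{\min}$ attaining the minimum in the definition of $\Lambda$. I would apply Assumption~\ref{ass:min-increment} edge by edge along $r^* = e_1 \circ \cdots \circ e_m$, using the Markov update $C_i(p \circ r_k) = \delta_i(\sig(p \circ r_{k-1}), e_k, C_i(p \circ r_{k-1}))$. Each step raises $C_i$ by at least $\delta_{\min,i}$, so a straightforward induction on $k$ gives
\[
C_i(p \circ r^*) \;\ge\; C_i(p) + m\,\delta_{\min,i} \;\ge\; m\,\delta_{\min,i},
\]
using $C_i(p) \ge 0$. Combining this with $C_i(p \circ r^*) \le B_i$ yields $m \le B_i/\delta_{\min,i}$, and since $m$ is an integer, $m \le \lfloor B_i/\delta_{\min,i}\rfloor = \Lambda$.

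The argument is short, and the only point needing care is that the bound holds for every $i \in I_{\min}$ simultaneously. We may therefore choose the minimizing index, or equivalently take the minimum over $i$ at the end. One could strengthen the bound to $\Lambda - |p|$ by also counting the edges of the prefix $p$, which starts at $s$ with $C(\emptyset)=\mathbf{0}$. However, this requires $p$ itself to be a path from the source, so I would state only the weaker bound $\Lambda$, which is what the lemma claims. No property of the skyline is actually used: the bound holds for every completable frontier element, and the restriction to $\mathcal{L}_1(t)$ only matters through the definition of $\bar{H}_1$.
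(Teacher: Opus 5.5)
Your proof is correct and follows essentially the same route as the paper: take a shortest feasible extension of a completable skyline element, sum the per-edge increments of at least $\delta_{\min,i}$ in a progressive dimension to get $m\,\delta_{\min,i} \le B_i$, use integrality to conclude $m \le \Lambda$, and then take the supremum. The only cosmetic difference is that you fix the minimizing index $i$ up front, whereas the paper derives the bound for every $i \in I_{\min}$ and takes the minimum at the end.
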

\begin{proof}
Let $p \in \mathcal{L}_1(t)$ with $H(p) < +\infty$.
By definition of $H(p)$, there exists a feasible extension
$r = e_1 \circ \cdots \circ e_m$ with $m = H(p)$ such that $C_i(p \circ r) \le B_i$ for all~$i$.

Fix $i \in I_{\min}$ and write $r_j = e_1 \circ \cdots \circ e_j$ for the prefix of $r$ of length $j$ (with $r_0$ the empty extension). By Assumption~\ref{ass:min-increment}, each edge contributes at least
$\delta_{\min,i}$:
\[
C_i(p \circ r_j) - C_i(p \circ r_{j-1}) \;\ge\; \delta_{\min,i}.
\]
Summing the per-edge increments gives:
\[
C_i(p \circ r) \;=\; C_i(p) + \sum_{j=1}^{m} \bigl(C_i(p \circ r_j) - C_i(p \circ r_{j-1})\bigr) \;\ge\; C_i(p) + m\,\delta_{\min,i} \;\ge\; m\,\delta_{\min,i},
\]
where the last inequality uses $C_i(p) \ge 0$.
Since $p \circ r$ is feasible,
$m\,\delta_{\min,i} \le C_i(p \circ r) \le B_i$,
so $m \le B_i / \delta_{\min,i}$.
As $m$ is an integer, $m \le \lfloor B_i / \delta_{\min,i} \rfloor$.

Since this holds for every $i \in I_{\min}$,
\[
H(p) = m \;\le\; \min_{i \in I_{\min}} \left\lfloor \frac{B_i}{\delta_{\min,i}} \right\rfloor = \Lambda.
\]
Taking the supremum over all such $p$ gives $\bar{H}_1 \le \Lambda$.
\end{proof}

We now arrive at the central result of the paper.
The completion potential $H$, the residual budget monotonicity (Lemma~\ref{lem:residual-budget}), and the bounded skyline potential $\bar{H}_1$ provide all the ingredients needed to answer the convergence question posed at the beginning of this section.
\begin{definition}[Frontier potential floor]
\label{def:frontier-potential-floor}
The \emph{frontier potential floor} at time $t$ is
\[
h^*(t) \;:=\; \min_{p \in \mathcal{F}(t),\; H(p)<+\infty} H(p).
\]
\end{definition}

The following theorem shows that restricting extraction to the skyline does not merely preserve access to the closest-to-completion element---it induces a deterministic, monotone descent of $h^*$ toward zero, with at most $\bar{H}_1$ strict descent steps before a feasible solution is produced.

\begin{theorem}[Deterministic frontier-potential descent]
\label{thm:deterministic-layer-first}
Consider any policy that extracts exclusively from~$\mathcal{L}_1(t)$ — in particular Skyline-First — and suppose $\mathcal{S}(t) = \emptyset$.
Then:
\begin{enumerate}[label=(\roman*)]
\item At every time $t$, the skyline $\mathcal{L}_1(t)$ contains at least one element achieving $h^*(t)$.
      In particular, a skyline-only policy always has access to the frontier element closest to a feasible solution.
\item $h^*(0) \le \bar{H}_1$;
\item $h^*$ is monotonically non-increasing: $h^*(t+1) \le h^*(t)$;
\item whenever the extracted element $p$ satisfies $H(p) = h^*(t)$,
      strict progress occurs: \\ 
      $h^*(t+1) \le h^*(t) - 1$.
\end{enumerate}
In particular, at most $\bar{H}_1$ effective descent steps suffice to reach $h^* = 0$ and produce a feasible path.
\end{theorem}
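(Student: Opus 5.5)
I will prove the four items in order, using Lemma~\ref{lem:residual-budget} as the main tool and Lemma~\ref{lem:frontier-persistence} to guarantee that $h^*(t)$ is finite. A feasible path exists by standing assumption. Since $\mathcal{S}(t)=\emptyset$, that path is undiscovered, so Lemma~\ref{lem:frontier-persistence} yields a frontier element matching the signature and cost of one of its strict prefixes. By the Markov property, the remaining suffix is a feasible extension of this element, so its $H$ is finite and $h^*(t)<+\infty$ at every relevant time.

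For (i), take $p\in\mathcal{F}(t)$ with $H(p)=h^*(t)$. If $p\notin\mathcal{L}_1(t)$, I repeat the chain argument from the proof of Theorem~\ref{thm:certified-stop}: finiteness of $\mathcal{F}_{\sig(p)}(t)$ gives some $q\in\mathcal{L}_{1,\sig(p)}(t)$ with $q\preceq p$ and $\sig(q)=\sig(p)$. Lemma~\ref{lem:residual-budget} then gives $H(q)\le H(p)$, hence $H(q)<\infty$. Minimality of $h^*(t)$ forces $H(q)=h^*(t)$.

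Item (ii) follows from (i) applied at $t=0$: the minimizer may be taken in $\mathcal{L}_1(0)$, so $h^*(0)\le\bar H_1$ by Definition~\ref{def:bounded-skyline-potential}. Lemma~\ref{lem:budget-potential-bound} further gives $h^*(0)\le\Lambda$.

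For (iii) and (iv), fix a minimizer $p^*$ at time $t$ and let $x$ be the extracted element. If $x\neq p^*$, then $p^*$ remains in $\mathcal{F}(t+1)$ and (iii) is immediate. Assumption~\ref{ass:frontier-invariant} may prune $p^*$ in favour of an element with the same $(\sigma,C)$, but that element has the same $H$, so the bound still holds. If $x$ has $H(x)=h^*(t)=h\ge1$, take a shortest feasible extension $r=e_1\circ r'$. The successor $x'=x\circ e_1$ has $C(x')\le C(x\circ r)\le B$ by monotonicity of $\delta_i$ ($\delta_i(\sigma,e,g)\ge g$), so it survives budget pruning. The case $h=0$ would make $x$ itself feasible, which $\mathcal{S}(t)=\emptyset$ excludes, and $h=1$ would make $x'$ a recorded feasible solution. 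So for $h\ge2$, $x'$ (or its same-$(\sigma,C)$ representative) enters $\mathcal{F}(t+1)$ with $H(x')\le h-1$, because $r'$ is a feasible extension of it. This gives (iv), and (iii) follows in the case where $x$ is a minimizer.

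The main obstacle is (iii) when $x$ is a minimizer but other minimizers also exist. This case is already covered, since (iv) gives a stronger bound. The remaining subtlety is bookkeeping: $x$ is removed from the frontier, and discovering a solution ends the hypothesis $\mathcal{S}=\emptyset$. I will therefore read $h^*=0$ as the event "a feasible path is produced" at the final expansion.

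The counting statement combines these pieces. $h^*$ starts at most $\bar H_1$, is integer valued, never increases, and drops by at least one at each effective step. It therefore reaches the value $1$ after at most $\bar H_1-1$ effective steps, and one further effective extraction produces a feasible path. This gives at most $\bar H_1$ effective descent steps in total.
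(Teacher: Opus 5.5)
Your proof is correct and follows the paper's argument essentially step for step: Lemma~\ref{lem:frontier-persistence} gives finiteness of $h^*(t)$, Lemma~\ref{lem:residual-budget} moves a minimizer into the skyline, (i) at $t=0$ yields (ii), and the first edge of a shortest feasible extension gives the strict descent. Your additions tighten bookkeeping points the paper's proof passes over quickly: the dominance-chain justification that a skyline element dominates the minimizer, the observation that same-$(\sigma,C)$ representatives share the same $H$, and the explicit treatment of the $h=1$ step as producing a solution rather than a frontier element.
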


\begin{proof}
\emph{Part~(i):}\enspace
Since a feasible path exists with $p \notin \mathcal{S}(t)$, Lemma~\ref{lem:frontier-persistence} ensures that $\mathcal{F}(t)$ contains an element with a feasible extension, so $\{p \in \mathcal{F}(t) : H(p) < +\infty\}$ is non-empty.
As $\mathcal{F}(t)$ is finite, the minimum $h^*(t)$ is well-defined and attained; let $p^*$ achieve $H(p^*) = h^*(t)$.

If $p^* \in \mathcal{L}_{1,\sig(p^*)}(t) \subseteq \mathcal{L}_1(t)$, we are done.
Otherwise, some $q \in \mathcal{L}_{1,\sig(p^*)}(t)$ dominates $p^*$, so Lemma~\ref{lem:residual-budget} gives $H(q) \le H(p^*) = h^*(t)$.

Since $h^*(t)$ is the global minimum, $H(q) = h^*(t)$, with $q \in \mathcal{L}_{1,\sig(p^*)}(t) \subseteq \mathcal{L}_1(t)$.

\emph{Part~(ii):}\enspace
The frontier is initialized with the trivial path $p_s = (s)$, so $\mathcal{F}(0) = \{p_s\}$.
By assumption, a feasible path $p$ from $s$ to some $t \in T$ exists; this path is a feasible extension of $p_s$, so $H(p_s) \le |p| < +\infty$ and $h^*(0) < +\infty$.
By~(i), some $p^* \in \mathcal{L}_1(0)$ achieves $H(p^*) = h^*(0)$.
Since $p^* \in \mathcal{L}_1(0)$, Definition~\ref{def:bounded-skyline-potential} gives $h^*(0) \le \bar{H}_1$.

\emph{Parts~(iii)--(iv):}\enspace
At step $t$, the policy extracts some $p \in \mathcal{L}_1(t)$ and inserts its successors.
If $H(p) > h^*(t)$, let $p^* \in \mathcal{F}(t)$ achieve $H(p^*) = h^*(t)$; since $p \neq p^*$, $p^*$ remains in $\mathcal{F}(t+1)$ and $h^*(t+1) \le H(p^*) = h^*(t)$, establishing~(iii).

Now suppose $H(p) = h^*(t)$.
We must have $h^*(t) \ge 1$: if $h^*(t) = 0$, then $H(p) = 0$, meaning $p$ is itself a feasible path discovered upon extraction, contradicting the premise.
By definition of $H$ (Definition~\ref{def:completion-potential}), $p$ admits at least one feasible extension.
Let $r = e_1 \circ \cdots \circ e_m$ be a shortest such extension, so that $m = H(p) = h^*(t) \ge 1$.
Let $p' = p \circ e_1$.
The remaining suffix $e_2 \circ \cdots \circ e_m$ is a feasible extension of $p'$ with $m-1$ edges, so $H(p') \le m - 1 = h^*(t) - 1$.
Moreover, $p'$ satisfies the budget: the cost function is non-decreasing ($\delta_i(\sigma, e, g) \ge g$), so
\[
C(p') \;\le\; C(p \circ r) \;\le\; B,
\]
where the last inequality holds because $p \circ r$ is feasible.
Hence $p'$ is not pruned and enters $\mathcal{F}(t+1)$, giving $h^*(t+1) \le H(p') \le h^*(t) - 1$, establishing~(iv).
\end{proof}

\subsection{Interpretation: Skyline Restriction Preserves Convergence}

Part~(i) is the key structural fact: at every time step, the frontier element closest to completion lies in the skyline, so a skyline-only policy never loses access to the most promising path.
Parts~(ii)--(iv) translate this accessibility into a deterministic monotone descent reaching a feasible solution in at most $\bar{H}_1$ effective steps.
By Lemma~\ref{lem:budget-potential-bound}, $\bar{H}_1 \le \Lambda = \min_{i \in I_{\min}} \lfloor B_i / \delta_{\min,i} \rfloor$, a quantity determined entirely by the budget and the minimum increments of the progressive dimensions.
Since these are parameters of the cost model, this bound is known before the search begins and can be used to anticipate the worst-case number of progress events.
Skyline restriction does not merely preserve convergence --- it structurally enforces it, with a number of descent steps bounded by cost-grid geometry alone.
The bound controls the number of strict descent steps; the total number of extractions between successive descents --- the plateau lengths --- remains uncontrolled (Section~\ref{sec:discussion}).

\paragraph{Running example (continued).}
Returning to the network of Figure~\ref{fig:running-graph}, consider the Skyline-First search starting from~$s$.
The initial frontier is $\mathcal{F}(0) = \{(s, \bot, (0,0,0))\}$, with $h^*(0) = 2$ since the shortest feasible extension from $s$ to $t$ uses two edges.
At step~1, the policy extracts $(s,\bot,(0,0,0))$ and expands it, producing frontier elements with signatures $(a,\mathsf{Z_1})$ at cost $(1,2,0)$ and $(b,\mathsf{Z_2})$ at cost $(1,1,0)$.
These belong to different per-signature frontiers and are both on the skyline.
At step~2, suppose we extract $(a,\mathsf{Z_1},(1,2,0))$; expanding it yields $(t,\mathsf{Z_1},(2,4,0))$ — a feasible path — so $h^*$ drops to~$0$ and the first solution is recorded.
This illustrates the two descent steps predicted by Theorem~\ref{thm:deterministic-layer-first}: from $h^*(0)=2$ to $h^*=1$ (after generating closer-to-target elements) and from $h^*=1$ to $h^*=0$ (solution found).

\subsection{Potential Stratification}

The completion potential induces a stratification of the frontier into levels
\[
\mathcal{F}_h(t) \;:=\; \bigl\{ p \in \mathcal{F}(t) : H(p) = h \bigr\}, \qquad h \in \mathbb{Z}_{\ge 0}.
\]
The minimal occupied level $\mathcal{F}_{h^*(t)}(t)$ represents the closest reachable region to feasibility.

A \emph{descent phase} is a maximal time interval during which the frontier potential floor decreases from some initial value down to~$0$, at which point a feasible path is discovered.
By Theorem~\ref{thm:deterministic-layer-first}, each strict decrease reduces $h^*$ by at least one.
Therefore, each phase contains at most $\bar{H}_1$ effective descent steps.

This suggests interpreting skyline-based search as a discrete dynamical system over a stratified state space, where feasible solutions correspond to absorbing states at level~$0$.

\begin{figure}[h!]
\centering
\begin{tikzpicture}[scale=1.0, every node/.style={font=\small}]

  \draw[->] (0,0) -- (0,5) node[above] {potential level $h$};

  \foreach \y/\label in {0/0,1/1,2/2,3/3,4/4} {
    \draw[gray!30] (0,\y) -- (9,\y);
    \node[left] at (0,\y) {$\mathcal{F}_{\label}$};
  }

  \foreach \x/\y in {
    1/4, 2.2/4,
    1.5/3, 2.8/3, 4/3,
    2/2, 3.5/2, 5/2,
    3/1, 4.5/1,
    4/0
  } {
    \filldraw[black] (\x,\y) circle (1.5pt);
  }

  \draw[very thick, red]
    (1,4) -- (1.5,3) -- (2,2) -- (3,1) -- (4,0);

  \draw[->, thick, red] (1,4) -- (1.5,3);
  \draw[->, thick, red] (1.5,3) -- (2,2);
  \draw[->, thick, red] (2,2) -- (3,1);
  \draw[->, thick, red] (3,1) -- (4,0);

  \node[right] at (1.1,4.2) {$h^*_0$};
  \node[right] at (1.6,3.2) {$h^*_1$};
  \node[right] at (2.1,2.2) {$h^*_2$};
  \node[right] at (3.1,1.2) {$h^*_3$};

  \filldraw[red] (4,0) circle (2.5pt);
  \node[below right] at (4,0) {solution};

  \node at (6.5,3.8) {\footnotesize skyline elements};
  \node[red] at (6.5,2.8) {\footnotesize descent trajectory};

\end{tikzpicture}
\caption{Stratification of the frontier by completion potential.
Each level $\mathcal{F}_h$ contains paths with identical residual completion distance.
The skyline (black points) spans multiple levels.
Red arrows mark the effective descent steps (Theorem~\ref{thm:deterministic-layer-first}): each occurs when a minimum-potential skyline element is extracted, but arbitrarily many non-descent extractions may separate consecutive steps.
Level $h=0$ represents feasible paths.}
\label{fig:stratified-descent}
\end{figure}
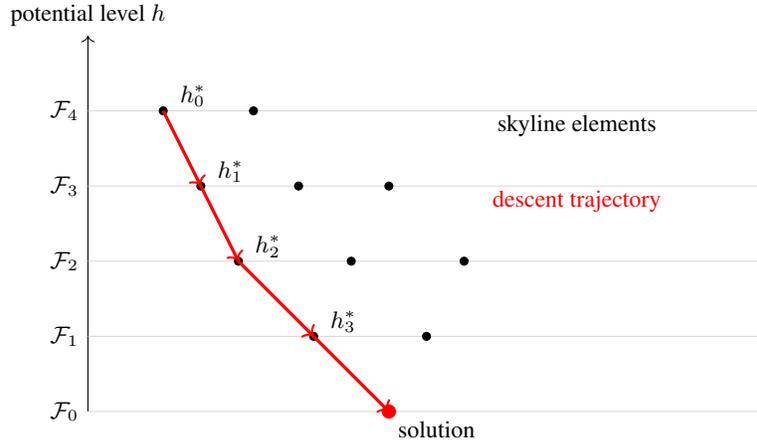
\FloatBarrier

Figure~\ref{fig:stratified-descent} provides a geometric interpretation of skyline-based search.
The search space is stratified by completion potential, and the frontier evolves within this layered structure.
The minimal potential $h^*(t)$ traces a descending path across levels, independently of the number of intermediate extractions performed within each layer.

\begin{figure}[h!]
\centering
\begin{tikzpicture}[scale=0.9, every node/.style={font=\small}]

\draw[->] (0,0) -- (10,0) node[right] {$t$};
\draw[->] (0,0) -- (0,4.5) node[above] {$h^*(t)$};

\foreach \y/\label in {0/0,1/1,2/2,3/3,4/4} {
  \draw[gray!30] (0,\y) -- (9.5,\y);
  \node[left] at (0,\y) {$\label$};
}

\draw[very thick]
  (0.5,4) -- (2.2,4)
  -- (2.2,3) -- (4.0,3)
  -- (4.0,2) -- (5.8,2)
  -- (5.8,1) -- (7.8,1)
  -- (7.8,0);

\node at (1.3,4.2) {\footnotesize plateau};
\node at (3.1,3.2) {\footnotesize plateau};
\node at (4.9,2.2) {\footnotesize plateau};
\node at (6.8,1.2) {\footnotesize plateau};

\filldraw (7.8,0) circle (2pt);
\node[below right] at (7.8,0) {solution};

\end{tikzpicture}
\caption{Temporal evolution of the frontier potential floor during a single descent phase.
The function $h^*(t)$ is monotone non-increasing.
Horizontal segments represent plateaus — extractions that do not reduce the minimal completion potential; vertical drops correspond to strict descent steps.
Reaching $h^*(t)=0$ yields a feasible path.}
\label{fig:descent-single}
\end{figure}
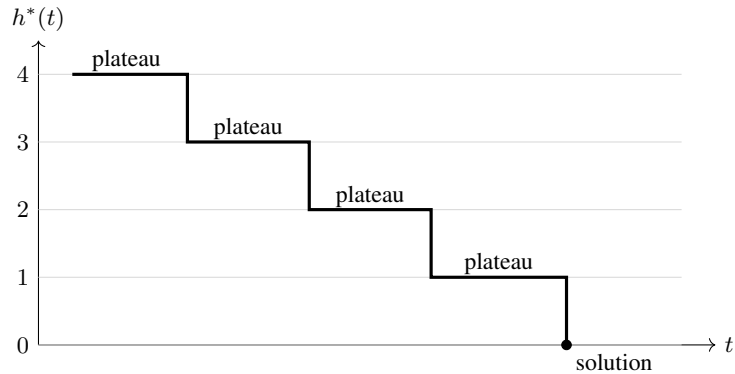
\FloatBarrier

Figure~\ref{fig:descent-single} shows the temporal dynamics of a single descent phase.
The search may spend an arbitrary number of extractions on a plateau, but each effective descent step strictly reduces the frontier potential floor.
This makes the number of true progress events structurally bounded, even though the temporal spacing between them is not.

\subsection{Amortized Descent for Multiple Solutions}

In many applications one seeks multiple diverse solutions.
We now extend the descent guarantee to $k$ successive solutions under a reachability assumption.
Each time a feasible path is discovered ($h^* = 0$), subsequent exploration must proceed through higher-potential levels until a new descent phase is initiated.
This induces a decomposition of the search into successive phases, each corresponding to the discovery of a new feasible solution.

\begin{proposition}[Amortized descent bound]
\label{cor:time-k}
Suppose the graph contains at least $k$ distinct feasible paths.
Their discovery requires at most
\[
k \cdot \bar{H}_1
\]
effective descent steps in total.
\end{proposition}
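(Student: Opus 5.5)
The plan is to decompose the run into successive \emph{descent phases}, one per discovered solution, and to apply the argument of Theorem~\ref{thm:deterministic-layer-first} within each phase. Concretely, let $\tau_0 = 0$ and let $\tau_j$ be the first time at which $|\mathcal{S}(t)| \ge j$. Phase $j$ is the interval $[\tau_{j-1}, \tau_j)$, and I would show that it contains at most $\bar{H}_1$ effective descent steps. An effective descent step is an extraction of an element $p$ with $H(p) = h^*(t)$, where $h^*$ is now computed relative to the not-yet-discovered feasible paths. Summing over $j = 1, \dots, k$ then gives the bound $k \cdot \bar{H}_1$.

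The key step is to relativize Theorem~\ref{thm:deterministic-layer-first}, because that theorem is stated under $\mathcal{S}(t) = \emptyset$. Inside phase $j$ I would redefine the completion potential as $H_j(p)$, the minimum length of an extension $r$ such that $p \circ r$ is feasible and its $(\sig, C)$ pair has not yet been recorded in $\mathcal{S}(\tau_{j-1})$. I would set $h_j^*(t) := \min_{p \in \mathcal{F}(t)} H_j(p)$. The hypothesis that at least $k$ distinct feasible paths exist guarantees that for $j \le k$ some feasible path remains undiscovered at time $\tau_{j-1}$. Lemma~\ref{lem:frontier-persistence} then puts a prefix of it in the frontier, so $h_j^*(\tau_{j-1}) < +\infty$.

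The four parts of the theorem would then be re-run with $H_j$ in place of $H$:
\begin{enumerate}[label=(\roman*),nosep]
\item The minimizer lies in the skyline, by Lemma~\ref{lem:residual-budget}, since dominating elements inherit every extension.
\item The initial value satisfies $h_j^*(\tau_{j-1}) \le \bar{H}_1$, using (i) and Definition~\ref{def:bounded-skyline-potential}, or directly Lemma~\ref{lem:budget-potential-bound}, since $H_j(p) \le \Lambda$ whenever finite.
\item The potential $h_j^*$ is monotone non-increasing.
\item Strict decrease occurs on extraction of a minimizer, via the first edge of a shortest witnessing extension, which is not pruned because costs are non-decreasing.
\end{enumerate}
A phase ends once $h_j^* = 0$ is produced by an expansion, which records a new solution. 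Hence each phase has at most $h_j^*(\tau_{j-1}) \le \bar{H}_1$ strict drops.

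The main obstacle is step (i) under the relativized potential. Lemma~\ref{lem:residual-budget} shows that the dominating element $q$ can follow the same extension $r$ feasibly. However, $q \circ r$ has cost at most that of $p \circ r$ and may \emph{coincide} in $(\sig, C)$ with an already-discovered solution, so it would not witness $H_j(q) \le H_j(p)$. I would handle this in one of two ways. The first is to count solutions by $(\sig, C)$ class, as Assumption~\ref{ass:frontier-invariant} already does, and argue that a dominating $q$ with a distinct cost either yields a new class or a class dominating the old one. The second, more robust, is to weaken (i): allow the phase's descent witness to be any skyline element with finite $H_j$, and note that the bound only needs $h_j^*$ to start at most $\bar{H}_1$ and drop by one per effective step. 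That second fact holds for the minimizer in $\mathcal{F}(t)$ regardless of where it sits. I expect to take the second route, so the obstacle reduces to checking that "effective descent step" is defined via the global minimizer. The count $k\bar{H}_1$ then follows by summation.
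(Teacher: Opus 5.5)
\textbf{There is a genuine gap, at step (ii), and the relativization causes it.}

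The constant $\bar H_1$ of Definition~\ref{def:bounded-skyline-potential} is a supremum of the \emph{unrelativized} potential $H$ over skyline elements, while $H\le H_j$ pointwise. So even a skyline element $q$ attaining $h_j^*$ would only give $H(q)\le\bar H_1$ and $H(q)\le h_j^*$. Neither inequality compares $h_j^*$ with $\bar H_1$. Your fallback via Lemma~\ref{lem:budget-potential-bound} yields only $h_j^*\le\Lambda$, and $\bar H_1\le\Lambda$ points the wrong way.

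In fact $h_j^*(\tau_{j-1})\le\bar H_1$ is false. Take $d=1$ with additive length, $\sig$ the current node, $T=\{t\}$ and $B=4$. Use an edge $s\to t$ of length $2$ and edges $s\to a$, $a\to t$, $a\to b$, $b\to c$, $c\to t$ of length $1$.
\begin{itemize}
\item The frontier is a singleton at every step, so every skyline-only policy runs identically. Its successive elements have $H=1,1,2,1$, hence $\bar H_1=2$.
\item At $\tau_1=1$ the frontier is the element at $a$ with cost $1$. Since $a\to t$ lands in the already recorded class $(t,2)$, its relativized potential is $H_2=3>\bar H_1$.
\item Deleting $b$ and $c$ refutes your finiteness claim as well: then $h_2^*(\tau_1)=+\infty$, although $s\to a\to t$ is still undiscovered.
\end{itemize}
Defining effective steps through $H_j$ also changes what is counted. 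Theorem~\ref{thm:deterministic-layer-first} and the statement count extractions of $H$-minimizers, not $H_j$-minimizers.

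Your first route, counting solutions by $(\sig,C)$ class, does not help either. Take a chain $s\to a_1\to\cdots\to a_n$ where $s$ and every $a_i$ with $i<n$ have a one-edge completion into the same class, and $a_n\to t$ lands in a new one. Then $\bar H_1=1$, while reaching the second class takes $n+1$ effective steps. So at class granularity the bound itself fails.

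\textbf{The missing point is that no relativization is needed; the paper does none.} Keep $H$ and $h^*$ exactly as in Theorem~\ref{thm:deterministic-layer-first}, and index phases by $\tau_j=\inf\{t:|\mathcal S(t)|\ge j\}$.
\begin{itemize}
\item At $\tau_{j-1}$, an undiscovered feasible path and Lemma~\ref{lem:frontier-persistence} give $h^*(\tau_{j-1})<+\infty$.
\item The Part~(i) dominance argument holds verbatim for $H$ by Lemma~\ref{lem:residual-budget}. It places a minimizer in $\mathcal L_1(\tau_{j-1})$, hence $h^*(\tau_{j-1})\le\bar H_1$ by Definition~\ref{def:bounded-skyline-potential}.
\item Parts~(iii)--(iv) run unchanged on $[\tau_{j-1},\tau_j)$.
\end{itemize}
The obstacle you anticipated, a descent heading toward an already recorded solution, does not arise, because $\mathcal S$ counts paths, not $(\sig,C)$ classes. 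When a minimizer at $h^*=1$ is extracted, the feasible path it generates has never been generated before. A path is produced only by expanding its unique parent prefix, and that happens at most once. So the new path enlarges $\mathcal S$ and closes the phase, whatever its class. Summing $\bar H_1$ over the $k$ phases gives $k\cdot\bar H_1$; the paper phrases this as an induction on $k$.
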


\begin{proof}
We proceed by induction on~$k$.
Define $\tau_j := \inf\{t \in \mathbb{N} : |\mathcal{S}(t)| \ge j\}$ for $j \ge 1$, with $\tau_0 := 0$.

\emph{Base case ($k=1$).}\enspace
Theorem~\ref{thm:deterministic-layer-first} gives at most $\bar{H}_1$ effective descent steps to discover the first feasible path.

\emph{Inductive step ($k \to k+1$).}\enspace
Assume the first $k$ solutions require at most $k \cdot \bar{H}_1$ descent steps and that the graph contains at least $k+1$ feasible paths.
At time $\tau_k$, at least one such path $p = e_1 \circ \cdots \circ e_m$ remains undiscovered.

Since $p \notin \mathcal{S}(\tau_k)$, Lemma~\ref{lem:frontier-persistence} ensures that $\mathcal{F}(\tau_k)$ contains an element with a feasible extension, so $\{p \in \mathcal{F}(\tau_k) : H(p) < +\infty\}$ is non-empty and $h^*(\tau_k) < +\infty$.

As $\mathcal{F}(\tau_k)$ is finite, $h^*(\tau_k)$ is attained by some $p^* \in \mathcal{F}(\tau_k)$.
By the same dominance argument as in Part~(i) of Theorem~\ref{thm:deterministic-layer-first}, some $q \in \mathcal{L}_1(\tau_k)$ satisfies $H(q) = h^*(\tau_k)$; since $q \in \mathcal{L}_1(\tau_k)$, Definition~\ref{def:bounded-skyline-potential} gives $h^*(\tau_k) \le \bar{H}_1$.
During the phase $[\tau_k, \tau_{k+1})$, the $(k+1)$-th solution has not yet been found, so the same monotonicity and strict-descent arguments as in the proof of Parts~(iii)--(iv) of Theorem~\ref{thm:deterministic-layer-first} apply: $h^*$ is non-increasing and each effective descent step reduces it by at least one.
Starting from $h^*(\tau_k) \le \bar{H}_1$, at most $\bar{H}_1$ such steps suffice to reach $h^* = 0$.
The total is at most $(k+1) \cdot \bar{H}_1$.
\end{proof}

\paragraph{Interpretation.}
This bound constrains the total number of effective progress events, not the total number of extractions: arbitrarily long plateaus may separate consecutive descent steps (see Section~\ref{sec:discussion}).
Conceptually, each phase reinitializes the same descent mechanism on a residual frontier: once a solution is found, a new completable prefix is guaranteed to exist, and the potential descent resumes from a fresh initial value bounded by $\bar{H}_1$.
The assumption requires only that $k$ feasible paths exist in the graph; the frontier invariant ensures that at least one completable prefix survives at each phase boundary.

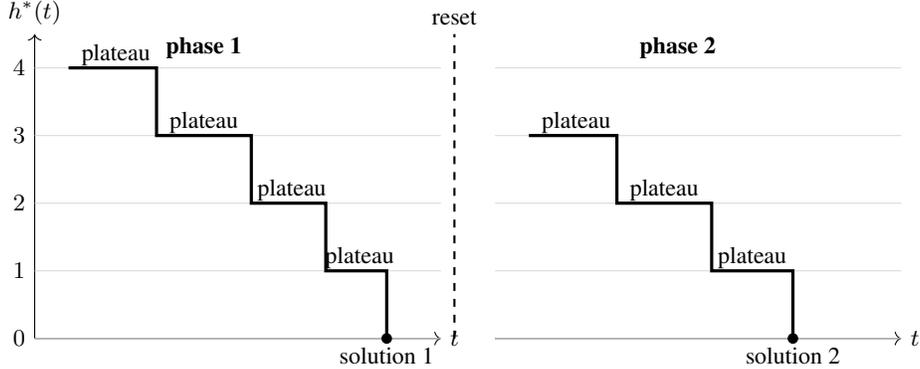
\begin{figure}[h!]
\centering
\begin{tikzpicture}[scale=0.9, every node/.style={font=\small}]

\draw[->] (0,0) -- (6,0) node[right] {$t$};
\draw[->] (0,0) -- (0,4.5) node[above] {$h^*(t)$};

\foreach \y/\label in {0/0,1/1,2/2,3/3,4/4} {
  \draw[gray!30] (0,\y) -- (6,\y);
  \node[left] at (0,\y) {$\label$};
}

\draw[very thick]
  (0.5,4) -- (1.8,4)
  -- (1.8,3) -- (3.2,3)
  -- (3.2,2) -- (4.3,2)
  -- (4.3,1) -- (5.2,1)
  -- (5.2,0);

\node at (2.5,4.3) {\textbf{phase 1}};
\node at (1.2,4.2) {\footnotesize plateau};
\node at (2.5,3.2) {\footnotesize plateau};
\node at (3.8,2.2) {\footnotesize plateau};
\node at (4.8,1.2) {\footnotesize plateau};

\filldraw (5.2,0) circle (2pt);
\node[below] at (5.2,0) {\footnotesize solution 1};

\draw[dashed, thick] (6.2,0) -- (6.2,4.5);
\node[above] at (6.2,4.5) {\footnotesize reset};

\draw[->] (6.8,0) -- (12.8,0) node[right] {$t$};

\foreach \y in {0,1,2,3,4} {
  \draw[gray!30] (6.8,\y) -- (12.8,\y);
}

\draw[very thick]
  (7.3,3) -- (8.6,3)
  -- (8.6,2) -- (10.0,2)
  -- (10.0,1) -- (11.2,1)
  -- (11.2,0);

\node at (9.5,4.3) {\textbf{phase 2}};
\node at (8.0,3.2) {\footnotesize plateau};
\node at (9.3,2.2) {\footnotesize plateau};
\node at (10.6,1.2) {\footnotesize plateau};

\filldraw (11.2,0) circle (2pt);
\node[below] at (11.2,0) {\footnotesize solution 2};

\end{tikzpicture}
\caption{Iterated descent phases across multiple solutions.
Each phase follows the same staircase pattern as Figure~\ref{fig:descent-single}.
The dashed separator indicates a phase reset: after a feasible path is discovered, the potential floor is reinitialized on the residual frontier, and a new descent begins.}
\label{fig:descent-phases}
\end{figure}
\FloatBarrier

Figure~\ref{fig:descent-phases} illustrates the iterated descent mechanism.
Each solution discovery triggers a phase reset, after which the potential floor is reinitialized at $h^*(\tau_j) \le \bar{H}_1$ and a new descent begins.

\section{Computational Interpretation}
\label{sec:complexity}

At any time $t$, the algorithm maintains the frontier $\mathcal{F}(t)$, the skyline $\mathcal{L}_1(t)$, and the solution set $\mathcal{S}(t)$, subject to three invariants: every extracted element belongs to $\mathcal{L}_1(t)$, the skyline width satisfies $|\mathcal{L}_1(t)| \le W(t) = |\Sigma_\mathcal{F}(t)| \cdot B^*$, and the algorithm halts once the stopping certificate of Section~\ref{sec:certified-stopping} holds.
All control decisions derive from the dominance structure alone.
We now collect these structural parameters into a runtime expression that reveals how computational cost is governed by cost-grid geometry rather than path combinatorics.

\begin{proposition}[Structural runtime bound]
\label{prop:runtime}
Let $t^*$ be the first time at which the stopping certificate holds.
The total number of cost-update and dominance-comparison operations performed by any skyline-only policy is at most
\[
O\!\left(
t^*
\cdot \Delta \cdot W \cdot d
\right),
\]
where $\Delta$ is the maximum out-degree, $W = \max_t |\Sigma_\mathcal{F}(t)| \cdot B^*$ is the peak skyline width, and $d$ is the number of cost dimensions.
\end{proposition}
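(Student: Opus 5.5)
The plan is a per-step cost accounting, multiplied by the number of steps $t^*$. Each iteration of any skyline-only policy does three things: it extracts one element of $\mathcal{L}_1(t)$, it expands that element into at most $\Delta$ successors, and it inserts each non-pruned successor back into the frontier. I will bound the cost-update and dominance-comparison work of each operation separately, using the invariant $|\mathcal{L}_1(t)| \le W(t) \le W$ from Proposition~\ref{thm:layer-width}.

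\emph{Expansion.} For each successor $p \circ e$, the new signature $\alpha(\sig(p),e)$ is one call. The new cost vector needs $d$ calls $\delta_i(\sig(p), e, C_i(p))$, which by Definition~\ref{def:lifted-markovian} is a constant-time grid lookup each. The budget test $C(p\circ e)\le B$ costs $d$ comparisons. Over at most $\Delta$ successors this is $O(\Delta d)$ cost-update operations per step.

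\emph{Insertion.} By Algorithm~\ref{alg:insert}, only the skyline of signature $\sigma^* = \sig(p\circ e)$ changes. The new vector $c^*$ only needs to be compared against the current members of $\mathcal{L}_{1,\sigma^*}$, and each comparison costs $O(d)$. This suffices for two reasons. First, if $c^*$ is dominated by some frontier element of signature $\sigma^*$, it is already dominated by a skyline element, by the same finite-chain argument used in the proof of Theorem~\ref{thm:certified-stop}. Second, the skyline members dominated by $c^*$ are exactly those that leave the skyline. Since $|\mathcal{L}_{1,\sigma^*}| \le B^* \le W$ by Proposition~\ref{thm:layer-width}, one insertion costs $O(Wd)$ comparisons. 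The Assumption~\ref{ass:frontier-invariant} duplicate test, i.e.\ a lookup of $(\sigma^*,c^*)$, costs $O(d)$ with hashing. Over $\Delta$ successors, insertion costs $O(\Delta W d)$ per step.

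\emph{Extraction.} Algorithm~\ref{alg:extract} scans $\mathcal{L}_1$ for an uncovered bin, computing $\beta$ in $O(d)$ per element, for $O(Wd)$ in total. This is absorbed in the insertion term since $\Delta \ge 1$; a step with zero out-degree costs only $O(Wd)$, which the $O(\cdot)$ still covers.

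\emph{Main obstacle: repairing $\mathcal{L}_{1,\sigma^*}$ after removal.} When an element is extracted, some frontier elements of the same signature may be promoted into the skyline. A naive recomputation would scan all of $\mathcal{F}_{\sigma^*}$, which is not bounded by $W$. I would handle this by amortization. The extracted element was removed from $\mathcal{L}_{1,\sigma^*}$. Any promoted element $r$ must have been dominated only by the extracted element, so its candidates lie in the second layer $\mathcal{L}_{2,\sigma^*}$, and $|\mathcal{L}_{2,\sigma^*}|\le B^*$ by Proposition~\ref{thm:layer-width}. Checking each candidate against the remaining skyline costs $O(B^{*2} d)$. To stay within the claimed bound I would instead charge the promotion work to insertion. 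The plan is to maintain, for each dominated element, a pointer to one dominating skyline element. Each promotion check is then paid for by the $O(Wd)$ comparisons made when that element was inserted, and since each step inserts at most $\Delta$ elements, this costs $O(\Delta W d)$ per step.

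\emph{Certificate test.} Checking Definition~\ref{def:stopping-certificate} incrementally is counted separately, or omitted, since the statement only counts cost-update and dominance-comparison operations.

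Summing expansion, insertion and extraction over the $t^*$ steps gives $O(t^* \Delta W d)$. If the amortization for skyline repair does not go through cleanly, the fallback is to state the bound under the convention that skyline maintenance charges are attributed to insertions. I expect the paper's own proof to be at that level of granularity, counting only the comparisons of each new successor against a skyline of width at most $W$.
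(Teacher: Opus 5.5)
Your expansion-plus-insertion accounting is the paper's proof. It bounds the run by at most $t^*$ extractions, each producing at most $\Delta$ successors. Each successor costs an $O(d)$ cost update plus a dominance check against a skyline of size at most $W$, which sums to $O(t^*\Delta W d)$. The paper never charges the repair of $\mathcal{L}_1$ after an extraction, so your fallback convention is exactly its level of granularity. Your finite-chain justification that comparing a successor against $\mathcal{L}_{1,\sig(p\circ e)}$ suffices is a detail the paper leaves implicit.

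The amortization you propose for the repair step, however, is not sound as written. A pointer to one dominating skyline element only defers the work. Each time the \emph{current} dominator of a dominated element $u$ is extracted, $u$ must be rechecked against the remaining skyline at cost $O(Wd)$ and re-pointed. This can recur several times during $u$'s lifetime, and a single extraction can trigger such rechecks for many elements at once. So the one scan made when $u$ was inserted cannot pay for all of them.

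The premise behind the obstacle is also too pessimistic. By Assumptions~\ref{ass:frontier-invariant} and~\ref{ass:dominance-monotone}, exactly as in the proof of Lemma~\ref{lem:bin-exclusivity} (which never uses that $p,q$ lie in the same layer, only that they lie in $\mathcal{F}_\sigma(t)$), one gets $|\mathcal{F}_\sigma(t)| \le |\mathcal{G}| \le B^*$. Scanning the whole per-signature frontier is therefore already within budget.

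This gives a clean way to include repair:
\begin{itemize}
\item Store, for every $u \in \mathcal{F}_\sigma(t)$, the number of elements of $\mathcal{F}_\sigma(t)$ strictly dominating it. Then $\mathcal{L}_{1,\sigma}(t)$ is exactly the set of zero-count elements.
\item An insertion scans $\mathcal{F}_{\sigma^*}$ once to set the new element's count and to increment the counts of the elements it dominates.
\item An extraction scans once to decrement the counts of the elements the extracted path dominated.
\end{itemize}
Each operation costs $O(B^* d)$, so a step costs $O((\Delta+1)B^* d)$, which is $O(\Delta W d)$ for $\Delta \ge 1$. The stated bound therefore survives even with skyline maintenance counted.
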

\begin{proof}
The algorithm performs at most $t^*$ extraction steps before the stopping certificate holds.
At each step, the extracted element is expanded along at most $\Delta$ outgoing edges, producing at most $\Delta$ successor paths.
For each successor, two operations are required:
(i)~a cost update, which evaluates $\delta_i(\sigma, e, C_i(p))$ for each of the $d$ dimensions, costing $O(d)$;
(ii)~a dominance check against the current skyline $\mathcal{L}_1(t)$ to determine whether the successor is dominated.
By Proposition~\ref{thm:layer-width}, $|\mathcal{L}_1(t)| \le W$ at every time $t$, so each dominance check compares $d$ components against at most $W$ skyline elements, costing $O(W \cdot d)$.
The per-extraction cost is therefore $O(\Delta \cdot (d + W \cdot d)) = O(\Delta \cdot W \cdot d)$, since $W \ge 1$.
Summing over all $t^*$ extractions yields the stated bound.
\end{proof}

Unlike classical multi-objective label-setting, where runtime depends on the total number of generated labels, the bound above is governed by cost-grid cardinality ($B^*$), skyline width ($W$), and the deterministic stopping certificate ($t^*$) — structural parameters of the problem geometry.
This does not eliminate combinatorial explosion, but shifts the controlling factor from path enumeration to cost-space geometry.
The factor $B^*$ is exponential in $d$, which is inherent to dominance-based schemes; the framework targets the fixed-dimension regime ($d$ typically $2$--$5$) where $B^*$ is a moderate constant.
The skyline width $W = \max_t |\Sigma_\mathcal{F}(t)| \cdot B^*$ also depends on the number of active signatures $|\Sigma_\mathcal{F}(t)|$, which is bounded by $|\Sigma|$ but may be much smaller in practice if only a fraction of the signature space is reachable from~$s$.

\paragraph{Certificate verification cost.}
The stopping condition requires verifying the resolution certificate over the current skyline.
In the present framework, this reduces to componentwise comparisons between each skyline element and the set of discovered solutions, using the constant vector $\delta_{\min}$.
Certificate verification therefore incurs a cost of $O(|\mathcal{L}_1(t)| \cdot |\mathcal{S}(t)| \cdot d)$ per step, bounded by $O(W \cdot |\mathcal{S}(t)| \cdot d)$.
No additional heuristic or precomputation is required: the certificate relies solely on the structural properties of the cost model.
In practice, the certificate need not be verified at every extraction step; periodic verification suffices without affecting the theoretical guarantees.
In more general settings, tighter lower bounds could be incorporated to accelerate certification, at the expense of additional computational overhead.

\section{Discussion}
\label{sec:discussion}

\subsection{On the Origin of Structural Assumptions}

The assumptions underlying our analysis reflect structural properties of real-world infrastructure graphs rather than restrictive modeling choices.
In telecom and datacenter digital twins, path construction is not performed over arbitrary graphs: transitions are constrained by local engineering rules, costs are discretized by design, and feasible solutions follow a directional progression imposed by network topology.
These properties induce a regular, bounded cost space that departs significantly from worst-case combinatorial settings.
Our results show that, in this regime, Pareto dominance is not merely a filtering criterion but a sufficient mechanism to control the entire traversal process.
This suggests that the perceived hardness of multi-criteria path search is, in part, an artifact of overly general formulations that do not exploit the structural constraints present in real systems.

\newpage
\subsection{On Plateau Lengths and Time-to-First-Solution}

Theorem~\ref{thm:deterministic-layer-first} establishes a deterministic descent property for the frontier potential, but does not control the number of extractions required between successive descent steps.

In particular, the duration of plateaus — intervals during which $h^*(t)$ remains constant — depends on the interaction between skyline selection and frontier evolution.
Newly generated successors may enter the skyline without decreasing the completion potential, preventing a simple ranking argument.
This turnover in the skyline composition---where the first layer changes without reducing the minimum potential---makes the number of extractions at constant potential level resistant to local counting arguments.

A trivial upper bound on the total number of extractions can be obtained from the finiteness of the quantized cost space.
Since the number of reachable signatures is finite and each signature admits at most $B^*$ cost vectors, any skyline-only exploration terminates after at most $|\Sigma| \cdot B^*$ extractions.

However, deriving tighter bounds on plateau lengths remains an open problem.
In particular, it is unclear under which structural conditions on the graph or the cost space one can guarantee tighter bounds on plateau lengths.

While plateau lengths are not explicitly bounded, termination is guaranteed after a finite number of extractions under the structural assumptions.
The absence of tighter bounds is the main limitation of the current framework: the theory controls progress events but not the idle time between them.
Closing this gap---by identifying structural conditions that bound plateau lengths---is the key step toward a fully predictive complexity theory.

Beyond plateau length analysis, several directions emerge for extending the present framework.

\subsection{Extensions and Generalizations}

A first direction concerns the robustness of the skyline-first control mechanism under relaxed structural conditions.
In particular, it remains open whether approximate variants of Pareto dominance---such as $\varepsilon$-dominance or layer compression---preserve the descent and termination guarantees, and how such relaxations affect layer width and stopping certificates.

A second direction concerns the interaction with heuristic guidance.
While the current results show that no external mechanism is required to ensure progress and termination, it is unclear whether heuristics can accelerate plateau resolution without breaking the structural guarantees induced by Pareto layers.

Finally, extending the framework beyond the current regime---toward non-Markovian costs, dynamic graphs, or partially continuous cost spaces---would help delineate the boundary between structural and combinatorial complexity in multi-criteria traversal.

\section{Conclusion}
\label{sec:conclusion}

Multi-criteria graph search has traditionally been approached through external control mechanisms---heuristics, scalarization, or population-based exploration---while Pareto dominance has been confined to a passive role.
This work shows that, under appropriate structural conditions on the cost space, Pareto geometry alone is sufficient to control both the progression and termination of the search.

By organizing the frontier into Pareto layers and restricting extraction to the skyline, we obtain a traversal process driven by deterministic descent in a completion potential, together with a stopping criterion based on dominance coverage.
This yields a unified framework in which scheduling and termination emerge directly from the geometry of the cost space, without requiring auxiliary guidance.

These results suggest a shift in perspective: the difficulty of multi-criteria search is not solely intrinsic to the problem, but also depends on whether the underlying cost structure is exploited.
In structured regimes, Pareto dominance ceases to be a passive filter and becomes a control principle.

This opens the way to structure-aware traversal engines for large-scale infrastructure systems, and more broadly, to a structural theory of search in multi-criteria cost spaces.


\bibliographystyle{unsrt}
\bibliography{references}

\end{document}